 \newtheorem{proposition}{Proposition}
 \newtheorem{definition}{Definition}
    \newtheorem{theorem}{Theorem}
  \newtheorem{property}{Axiom}
  \theoremstyle{definition}
  \newtheorem{example}{Example}
\newcommand{\dsum}{\displaystyle\sum}
\begin{document}
\title{\bf Axiomatic properties of inconsistency indices for pairwise comparisons}
\author{
{\bf Matteo Brunelli}
\\
{\normalsize Systems Analysis Laboratory, Department of Mathematics and Systems Analysis} \\
{\normalsize Aalto University}, {\normalsize P.O. box 11100,
FIN-00076 Aalto, Finland}
\\ {\normalsize e--mail:
\texttt{matteo.brunelli@aalto.fi}}
\vspace{0.3cm}\\
{\bf Michele Fedrizzi}
\\
{\normalsize Department of Industrial Engineering} \\
{\normalsize University of Trento}, {\normalsize Via Mesiano 77, I-38123 Trento, Italy}
\\ {\normalsize e--mail:
\texttt{michele.fedrizzi@unitn.it}}
}
\date{}

\maketitle \thispagestyle{empty}


\begin{center}
{\bf Abstract}
\end{center}

{\small \noindent 
Pairwise comparisons are a well-known method for the representation
of the subjective preferences of a decision maker. Evaluating their
inconsistency has been a widely studied and discussed topic and
several indices have been proposed in the literature to perform this
task. Since an acceptable level of consistency is closely related
with the reliability of preferences, a suitable choice of an
inconsistency index is a crucial phase in decision making processes.
The use of different methods for measuring consistency must be
carefully evaluated, as it can affect the decision outcome in
practical applications. In this paper, we present five axioms aimed
at characterizing inconsistency indices. In addition, we prove that
some of the indices proposed in the literature satisfy these axioms,
while others do not, and therefore, in our view, they may fail to
correctly evaluate inconsistency.}

 \vspace{0.3cm}
 \noindent {\small {\bf
 Keywords}: Pairwise comparisons, inconsistency indices, axiomatic properties, analytic hierarchy process.}
 \vspace{0.3cm}

\section{Introduction}

Pairwise comparisons have been used in some operations research methods to represent the preferences of experts and decision
makers over sets of alternatives, criteria, features and so on. For simplicity, in this paper we shall speak of alternatives only, bearing in mind that it is a reductive view. The main advantage in using pairwise comparisons is that they allow the
decision maker to compare two alternatives at a time, thus
reducing the complexity of a decision making problem, especially
when the set under consideration is large, and 
serve as a starting point to derive a priority vector which is the final rating of the alternatives. Pairwise comparisons have been used in well-known decision analysis methods as, for instance, the Analytic Hierarchy Process (AHP) by \citet{Saaty1977} (see \citet{IshizakaLabib2011} for an updated discussion), and its generalizations, which have been proved effective in solving many decision problems \citep{IshizakaEtAl2011}. 


In the literature, and in practice, it is assumed that the
dependability of the decision is related to the consistency
of his/her pairwise judgments. 
That is, the more
rational the judgments are, the more likely it is that the
decision maker is a good expert with a deep insight into the
problem and pays due attention in eliciting his/her
preferences. Similarly, if judgments are very
intransitive and irrational, it is more plausible that the expert expressed them with
scarce competence, since he/she would lack the ability to rationally discriminate between different alternatives. This is summarized by Irwin's thesis claiming that ``...preference is exactly as fundamental as discrimination and that if the organism exhibits a discrimination, it must also exhibit a preference and conversely" \citep{Irwin1958}.
Following \citet{Saaty1994} the approach to decision making based on pairwise
comparisons, and the AHP in particular, is grounded in the relative measurement theory and it is in this framework that \citet{Saaty1993} too claimed that pairwise comparisons should be `near consistent' to ensure that they are
a sufficiently good approximation of the decision makers' real preferences. This seems to support the importance of having reliable tools capable of capturing the degree of inconsistency of pairwise comparisons. The importance of having reliable inconsistency indices becomes even more evident when one considers that their practical use has gone beyond the sole quantification of inconsistency. For instance, they have been employed by \citet{LamataPelaez2002} and \citet{ShiraishiEtAl1999} to estimate missing comparisons, by \citet{Harker1987} to derive ratings of alternatives from incomplete preferences and by \citet{XuCuiping1999} and \citet{XuXia2013} to improve the consistency of pairwise comparisons.\\

On this fertile ground, researchers  have proposed various inconsistency indices---functions associating pairwise comparisons to real numbers representing the degrees of inconsistency of the pairwise judgments.
In this paper we concern ourselves with the fact that inconsistency
indices have been introduced heuristically and independently from
each other, neither referring to a general definition, nor to a set
of axiomatic properties.
Hence, this paper introduces some axiomatic properties for inconsistency
indices and shows that some indices proposed in literature fail to
satisfy these axioms.
This paper is outlined as follows.
In Section \ref{sec:preliminaries} we introduce preliminary notions and the notation.
In Section \ref{sec:indices} we shortly define the inconsistency indices that are studied in this paper.
Next, in Section \ref{sec:properties} we introduce and interpret five axioms and in Section \ref{sec:results} we present results
regarding the inconsistency indices and prove that some of them satisfy the required axioms while four others do not.
For a simpler description, some proofs are given in the
appendix. In Section \ref{sec:open_question}, we conclude the discussion of the axioms and draw the conclusions.\\
Throughout this paper, we refer to `inconsistency indices', since
what they really measure is the amount of inconsistency in pairwise
comparisons. Nevertheless, in literature such indices are often
referred to as `consistency indices', while both expressions refer to an index
which estimates the deviation from consistency.

\section{Preliminaries}
\label{sec:preliminaries}

Pairwise comparison matrices are convenient tools to model the decision makers' pairwise intensities of preference over sets of alternatives.
Formally, given a set of alternatives $X=\{ x_{1},\ldots,x_{n}
\}~(n \geq 2)$, \citet{Saaty1977} defined a \emph{pairwise comparison matrix}
$\mathbf{A}=(a_{ij})_{n \times n}$ as a positive and reciprocal square
matrix of order $n$, i.e. $a_{ij}>0,~a_{ij}a_{ji}=1,~\forall i,j$,
where $a_{ij}$ is an estimation of the degree of preference of
$x_i$ over $x_j$. A pairwise comparison matrix
is {\em consistent} if and only if the following
transitivity condition holds:
\begin{equation}
\label{eq:transitivity}
a_{ik}=a_{ij}a_{jk}~  \forall i,j,k.
\end{equation}
\noindent Property (\ref{eq:transitivity}) means that preferences
are fully coherent, and each direct comparison
$a_{ik}$ between $x_i$ and $x_k$  is confirmed by all indirect
comparisons $a_{ij} a_{jk} \; \forall j$. If and
only if $\mathbf{A}$ is consistent, then there exists a priority (or weight)
vector $\mathbf{w}=(w_{1},\ldots,w_{n})$ such that
\begin{equation}
\label{eq:ratio}
a_{ij}=\frac{w_i}{w_j}~  \forall i,j.
\end{equation}
\citet{CrawfordWilliams1985} proved that if $\mathbf{A}$ is consistent, then the components of vector
$\mathbf{w}$ can be obtained by using the geometric mean method,
\begin{equation}
\label{eq:mediageometrica}
w_{i}=\left( \prod_{j=1}^{n}a_{ij} \right)^{\frac{1}{n}}~  \forall i .
\end{equation}
Another method for obtaining the priorities is the eigenvector method by \citet{Saaty1977}. Namely, the priority vector $\mathbf{w}$ is the solution of the following equation
\begin{equation}
\label{eq:eigenvector}
\mathbf{A}\mathbf{w}=\lambda_{\max}\mathbf{w} \, ,
\end{equation}
where $\lambda_{\max}$ is the maximum eigenvalue of $\mathbf{A}$
whose existence and properties refer to the Perron-Frobenius
theorem. If $\mathbf{A}$ is consistent, both methods yield the same
priority vector, while they may give different vectors if
$\mathbf{A}$ is not consistent.\\
We define the set of all pairwise
comparison matrices as
\[
\mathcal{A}= \{ \mathbf{A}=(a_{ij})_{n \times n} | a_{ij}>0, a_{ij}a_{ji}=1 ~\forall i,j, ~ n>2 \}.
\]
Similarly, the set of \emph{consistent} pairwise comparison matrices $\mathcal{A}^{*} \subset  \mathcal{A}$ is defined as
\[
\mathcal{A}^{*}= \{ \mathbf{A}=(a_{ij})_{n \times n} | \mathbf{A} \in \mathcal{A}, a_{ik} = a_{ij}a_{jk} ~ \forall i,j,k \}.
\]
Seen from this perspective, a matrix can either be consistent or
non-consistent (inconsistent). However, often, \emph{degrees of
inconsistency} are assigned to pairwise comparison matrices so that, if the inconsistency is not too high,
the judgments in the pairwise comparison matrix are taken to be as sufficiently reliable. To sum up, the idea is
that a good inconsistency index should indicate `how much' the pairwise comparison matrix deviates from the full
consistency. 
Thus, an inconsistency
index $I$ is a real-valued function
\begin{equation}
\label{eq:inconsistency_index}
I: \mathcal{A} \rightarrow \mathbb{R}.
\end{equation}
Although the codomain of the definition is the set of real numbers,
each inconsistency index $I$ is univocally associated to a given image $\text{Im}(I) \subseteq \mathbb{R}$.

\section{Inconsistency indices}
\label{sec:indices}

In this section we shortly recall some inconsistency indices, giving a special emphasis to those which will be analyzed in the next section with respect to the five axioms. For a survey the reader can refer to \citet{BrunelliCanalFedrizzi}.
The first index is the \emph{Consistency Index}, proposed by \citet{Saaty1977}.
\begin{definition}[Consistency Index \citep{Saaty1977}]
\label{def:CI}
Given a pairwise comparison matrix $\mathbf{A}$, the Consistency Index is defined as
\begin{equation}
\label{eq:CI}
CI(\mathbf{A})=\frac{\lambda_{\max}-n}{n-1},
\end{equation}
where $\lambda_{\max}$ is the principal right eigenvalue of
$\mathbf{A}$.
\end{definition}
\noindent Formula (\ref{eq:CI}) refers to the property that
the maximum eigenvalue $\lambda_{\max}$ of a pairwise comparison matrix $\mathbf{A}$ is equal to $n$ if and only if the matrix is consistent, and greater than $n$ otherwise. Saaty proposed also
a more suitable measure of inconsistency, called Consistency Ratio
(CR),
\begin{equation}
\label{eq:CR}
CR(\mathbf{A})=\frac{CI(\mathbf{A})}{RI}
\end{equation}
where $RI$, Random Index, is a suitable normalization factor.

\citet{GoldenWang1989} proposed a method to compute the deviations
between the entries of  a pairwise comparison matrix and their
theoretical values $w_{i}/w_{j}$.
\begin{definition}[Index $GW$ \citep{GoldenWang1989}]
\label{def:GW}
Given a pairwise comparison matrix $\mathbf{A} = (a_{ij})_{n
\times n} \in \mathcal{A}$, the entries of every column are
normalized by dividing them by the sum of the elements of their
column $\sum_{i=1}^{n}a_{ij}$. Let us denote by $\bar{\mathbf{A}}
= (\bar{a}_{ij})_{n \times n}$ the new normalized matrix. Each
priority vector associated (by either (\ref{eq:mediageometrica}) or (\ref{eq:eigenvector})) with $\mathbf{A}$ is normalized by
dividing each component by the sum of the components and denoted
by $\bar{\mathbf{w}}=(\bar{w}_{1},\ldots,\bar{w}_{n})$, so that
$\sum_{i=1}^{n} \bar{w}_{i} =1$. The inconsistency index $GW$ is
defined as
\begin{equation}
\label{eq:GW} GW(\mathbf{A})=\frac{1}{n}
\sum_{i=1}^{n}\sum_{j=1}^{n}| \, \bar{a}_{ij}-\bar{w}_{i}| \, .
\end{equation}
\end{definition}

\citet{CrawfordWilliams1985}, and later
\citet{AguaronMoreno2003}, proposed and refined an index that also
computes distances between the decision maker's judgments and
their theoretical values obtained as ratios $w_{i}/w_{j}$.
\begin{definition}[Geometric Consistency Index \citep{AguaronMoreno2003}]
\label{def:GCI}
Given a pairwise comparison matrix of order $n$, the Geometric Consistency Index $GCI$ is defined as follows
\begin{equation}
\label{eq:GCI}
GCI(\mathbf{A})=\frac{2}{(n-1)(n-2)}\sum_{i=1}^{n-1}\sum_{j=i+1}^{n} \,
\ln^2 \left( {a_{ij}\frac{w_j}{w_i}} \right) \, ,
\end{equation}
where the weights are obtained by means of the geometric mean method (\ref{eq:mediageometrica}).
\end{definition}

\citet{Barzilai1998} formulated a normalized index based on
squared errors. By using open unbounded scales, he stated several relevant algebraic and geometric properties.
\begin{definition}[Relative Error \citep{Barzilai1998}]
\label{def:RE}
Given a pairwise comparison matrix $\mathbf{A}\in\mathcal{A}$, the relative error, $RE$, is defined as
\begin{equation}
\label{eq:RE}
RE(\mathbf{A})=1-\dfrac{ \dsum_{i=1}^{n} \sum_{j=1}^{n} \left(
\dfrac{1}{n} \sum_{k=1}^{n} \log a_{ik}- \dfrac{1}{n}
\sum_{k=1}^{n} \log a_{jk} \right)^{2}}{ \dsum_{i=1}^{n}
\sum_{j=1}^{n} (\log a_{ij})^{2}} \, .
\end{equation}
for all matrices $\mathbf{A} \neq (1)_{n \times n}$, and zero if $\mathbf{A} = (1)_{n \times n}$.
\end{definition}
\citet{PelaezLamata2003} defined an inconsistency index
for a pairwise comparison matrix as the average of all the determinants of its
$3 \times 3$ submatrices, each containing a different
transitivity of the original matrix.
\begin{definition}[Index $CI^{\ast}$ \citep{PelaezLamata2003}]
\label{def:CI*}
Given a pairwise comparison matrix of order $n$, the index $CI^{\ast}$ is
\begin{equation}
\label{svil_det_3X3} CI^{*}(\mathbf{A}) = \sum_{i=1}^{n-2}
\sum_{j=i+1}^{n-1} \sum_{k=j+1}^{n} \left(
\frac{a_{ik}}{a_{ij}a_{jk}} + \frac{a_{ij}a_{jk}}{a_{ik}} - 2
\right) \bigg/ \binom{n}{3} \: .
\end{equation}
\end{definition}
\citet{ShiraishiEtAl1998} proposed the
coefficient $c_{3}$ of the characteristic polynomial of
$\mathbf{A}$ as an index of inconsistency. \citet{BrunelliCritchFedrizzi2011} proved
that index $CI^{*}$ is proportional to $c_3$ .

\citet{SteinMizzi2007} considered the general result
that the columns of a consistent pairwise comparison matrix are
proportional, i.e. $\text{rank}(\mathbf{A})=1$, if and only if
$\mathbf{A}$ is consistent. Thus, they formulated an index which
takes into account how far the columns are from being proportional
to each other.
\begin{definition}[Harmonic Consistency Index \citep{SteinMizzi2007}]
\label{def:HCI}
Let $\mathbf{A}$ be a pairwise comparison matrix and
 $s_j=\sum_{i=1}^n a_{ij}$ for $j=1, \ldots ,n$.
Then, the harmonic consistency index is
\begin{equation}
\label{eq:HCI} HCI(\mathbf{A})=\frac{(HM(\mathbf{A})-n)(n+1)}{n(n-1)},
\end{equation}
where $HM(\mathbf{A})$ is the harmonic mean of $(s_1,\ldots,s_n)$:
\begin{equation}
\label{eq:mediaarmonica}
HM(\mathbf{A})=\frac{n}{\sum_{j=1}^{n}\frac{1}{s_{j}}}.
\end{equation}
\end{definition}

\citet{Koczkodaj1993} and \citet{DuszakKoczkodaj1994} introduced a max-min based inconsistency index which was later compared with $CI$ by \citet{BozokiRapcsak2008}.
\citet{CavalloD'Apuzzo2009} characterized pairwise comparison matrices
by means of Abelian linearly ordered groups and stated their inconsistency index in this general framework.

Another index, $NI^{\sigma}_{n}$, was introduced by \citet{RamikKorviny2010} to estimate the inconsistency of
pairwise comparison matrices with elements expressed as triangular fuzzy numbers. Expressing judgments in such a
way is popular to account for uncertainties in
the decision making process. Nevertheless, pairwise comparison matrices can be seen as special cases of matrices with fuzzy entries and therefore this index can be introduced in the context of pairwise comparison matrices with real entries.
\begin{definition}[Index $NI^{\sigma}_{n}$ \citep{RamikKorviny2010}]
\label{def:NI}
Given a real number $\sigma > 0$ and a pairwise comparison matrix $\mathbf{A}\in\mathcal{A}$ of order $n$ with entries in the interval $[1/\sigma,\sigma]$, the index $NI^{\sigma}_{n}$ is defined as
\[
NI_{n}^{\sigma}(\mathbf{A})= \gamma_{n}^{\sigma} \max_{i,j} \left\{ \left| \frac{w_{i}}{w_{j}}-a_{ij} \right| \right\},
\]
where the weights are obtained by means of the geometric mean method (\ref{eq:mediageometrica}) and
\[
\gamma_{n}^{\sigma} =
\begin{cases}
\frac{1}{\max \left\{ \sigma - \sigma^{\frac{2-2n}{n}}, \sigma^{2}\left(\left(\frac{2}{n}\right)^{\frac{2}{n-2}} - \left( \frac{2}{n} \right)^{\frac{n}{n-2}} \right) \right\}}, & \text{if }\sigma < \left( \frac{n}{2} \right) ^{\frac{n}{n-2}} \\
\frac{1}{\max \left\{ \sigma- \sigma^{\frac{2-2n}{n}}, \sigma^{\frac{2n-2}{n}} - \sigma \right\} }, & \text{if }\sigma \geq \left( \frac{n}{2} \right)^{\frac{n}{n-2}}
\end{cases}
\]
is a positive normalization factor.
\end{definition}
Other notable indices are the parametric method by \citet{OseiBryson2006} and the ambiguity index by \citet{Salo1993}.

\section{Axioms}
\label{sec:properties}
In spite of the large number of indices, the question on how well
they estimate inconsistency of pairwise comparisons has been left
unanswered. To answer this question, in this section we introduce
and justify five properties to narrow the general definition of
inconsistency index given in (\ref{eq:inconsistency_index}) and to
shed light on those indices which do not satisfy minimal reasonable
requirements. Throughout this and the next sections we are going to
propose some examples in order to provide numerical
and visual evidence of the \emph{necessity} of the following
axiomatic system.

\subsection*{Axiom 1: Existence of a unique element representing consistency}
With axiom 1 (A1) we require that all the consistent matrices are
identified by a unique real value of an inconsistency index. This
allows to distinguish between matrices that either belong or do not to
$\mathcal{A}^{*}$. Formally, A1 is as follows.
\begin{property}
An inconsistency index $I$ satisfies A1, if and only if
 \begin{equation}
 \label{A1}
\exists ! \nu \in \mathbb{R} \text{ such that } I(\mathbf{A})= \nu \Leftrightarrow \mathbf{A} \in \mathcal{A}^{*}
 \end{equation}
 \end{property}

\begin{example}
The following inconsistency index satisfies A1 with $\nu=0$.
\[
I(\mathbf{A})= \sum_{i=1}^{n}\sum_{j=1}^{n}\sum_{k=1}^{n} | a_{ik}-a_{ij}a_{jk} | \, .
\]
\end{example}


%


For sake of simplicity, we assume, without loss of generality,
that, for every inconsistency  index $I(\mathbf{A})$, the value
$\nu$ associated with each consistent matrix is the minimum value
of the index: $I(\mathbf{A})\geq \nu ~\forall
\mathbf{A}\in\mathcal{A}$. The assumption is that the more
inconsistent is $\mathbf{A}$, the greater is $I(\mathbf{A})$. Some
already introduced indices assume the opposite. By considering,
for example, the index introduced by \citet{ShiraishiEtAl1998}, it
is $c_{3}(\mathbf{A})\leq 0 ~\forall \mathbf{A}\in\mathcal{A}$,
while the consistency value is $c_{3}(\mathbf{A})=\nu= 0 ~\forall
\mathbf{A}\in\mathcal{A}^{*}$. Nevertheless, in such cases it is
sufficient to change the sign of the index to fulfill our
assumption.

\subsection*{Axiom 2: Invariance under permutation of alternatives}
It is desirable that an inconsistency index does not depend on the order in which the
alternatives are associated with rows and columns of
$\mathbf{A}$. Therefore, an inconsistency index should be invariant under
row-column permutations. To formalize this second axiom (A2), we recall that a permutation matrix
is a square binary matrix
$\mathbf{P}$ that has exactly one entry equal to 1 on each row and
each column and 0's elsewhere (see \citealt{HornJohnson1985}). We also recall that
$\mathbf{P}\mathbf{A}\mathbf{P}^{\mathrm{T}}$ is the matrix obtained from
$\mathbf{A}$ through the row-column permutations associated with $\mathbf{P}$.
\begin{property}
An inconsistency index $I$ satisfies A2, if and only if
\begin{equation}
\label{eq:permutation}
I \left( \mathbf{P}\mathbf{A}\mathbf{P}^{\mathrm{T}} \right)=I(\mathbf{A}) \; \; \forall \mathbf{A}\in \mathcal{A}
\end{equation}
and for any permutation matrix $\mathbf{P}$.
\end{property}
%
\begin{example}
Given a pairwise comparison matrix $\mathbf{A}$ and a permutation matrix $\mathbf{P}$
\[
\mathbf{A}=
\begin{pmatrix}
1   & 2   & 5 \\
1/2 & 1   & 2 \\
1/5 & 1/2 & 1
\end{pmatrix} ~~~~
\mathbf{P}=
\begin{pmatrix}
0   & 1   & 0 \\
1   & 0   & 0 \\
0   & 0   & 1
\end{pmatrix}
\]
one obtains
\[
\mathbf{P}\mathbf{A}\mathbf{P}^{\mathrm{T}}=
\begin{pmatrix}
0   & 1   & 0 \\
1   & 0   & 0 \\
0   & 0   & 1
\end{pmatrix}
\begin{pmatrix}
1   & 2   & 5 \\
1/2 & 1   & 2 \\
1/5 & 1/2 & 1
\end{pmatrix}
\begin{pmatrix}
0   & 1   & 0 \\
1   & 0   & 0 \\
0   & 0   & 1
\end{pmatrix}
=
\begin{pmatrix}
1   & 1/2   & 2 \\
2   & 1   & 5 \\
1/2   & 1/5   & 1
\end{pmatrix}
\]
for which (\ref{eq:permutation}) is required to hold.
\end{example}

\subsection*{Axiom 3: Monotonicity under reciprocity-preserving mapping}
Unlike the previous axioms, which were simple regularity conditions
imposed to $I(\mathbf{A})$, axiom 3 (A3) is more constraining. The
idea is that, if preferences are intensified, then an inconsistency
index cannot return a lower value. However, before we formalize it,
we describe its meaning. If all the expressed preferences indicate
indifference between alternatives, it is $a_{ij}=1 \;  \forall i,j$,
and $\mathbf{A}$ is consistent. Going farther from this uniformity
means having stronger judgments and this should \emph{not} make
their possible inconsistency \emph{less} evident. In other words,
intensifying the preferences (pushing them away from indifference)
should not de-emphasize the characteristics of these preferences and
their possible contradictions. Clearly, the crucial point is to find
a transformation which can intensify preferences and preserve their
structure at the same time. In the following, we are going to prove
that such a transformation exists and is unique.
Given $\mathbf{A}=(a_{ij}) \in \mathcal{A}$, we denote such transformation with $\hat{a}_{ij} = f(a_{ij})$. The newly constructed matrix $\hat{\mathbf{A}}=(\hat{a}_{ij}) $ obtained from $\mathbf{A}$ by means of $f$ must be positive and reciprocal so that it still belongs to $\mathcal{A}$. Hence
\[
\hat{a}_{ji} = 1 / \hat{a}_{ij},
\]
\noindent which is
\begin{align*}
f(a_{ji})&= 1 / f(a_{ij}) \\
f(1/a_{ij}) &= 1 / f(a_{ij}) \\
f(a_{ij}) f(1/a_{ij})  &= 1
\end{align*}
\noindent or, more compactly, with $a_{ij}=x$,
\begin{equation}
\label{Cauchy1}
 f(x) f(1/x) = 1 .
\end{equation}
\noindent Equation (\ref{Cauchy1}) is a special case, for $y=1/x$,
of the well-known Cauchy functional equation
\begin{equation}
\label{Cauchy2}
 f(x) f(y) = f(x y) .
\end{equation}
In fact, by substituting $x=1$ into (\ref{Cauchy1}), it is
$f(1)f(1)=1$. Since $f$ must be positive, it follows $f(1)=1$. Then,
(\ref{Cauchy1}) can also be written in the form
$f(x)f(1/x)=f(x(1/x))=f(1)$.
Taking into account that $x= a_{ij} > 0$, it is therefore
sufficient to assume the continuity of $f$ in order to obtain a
unique non-trivial solution of (\ref{Cauchy1}) (see \citealt{Aczel1966})
\begin{equation}
\label{soluzione-Cauchy}
 f(x) = x^b , \; \; b \in \mathbb{R} .
\end{equation}
Therefore, the only continuous transformation $f(a_{ij})$ preserving
reciprocity is (\ref{soluzione-Cauchy}), i.e. $f(a_{ij})= a_{ij}^b$.
In the following, we will denote matrix $(a_{ij}^{b})_{n \times n}$
as $\mathbf{A}(b)$. Clearly, for $b>1$ each entry $a_{ij} \neq 1 $
is moved farther from indifference value 1, which represents an
intensification of preferences:
\begin{align*}
 b>1 , \;\;  a_{ij} >1    ~&\Rightarrow a_{ij}^b > a_{ij}>1   \\
 b>1 , \;\;  0< a_{ij} <1 ~&\Rightarrow 0<a_{ij}^b < a_{ij}<1 \, .
\end{align*}
\noindent The opposite occurs for $0<b<1$, thus representing a
weakening of the preferences. For $b=0$ full indifference is
obtained, $a_{ij}^b = 1$, while $b<0$ corresponds to preference
reversal. Moreover, transformation (\ref{soluzione-Cauchy}) is
consistency-preserving, i.e. if $\mathbf{A}=(a_{ij}) $ is
consistent, then also $\mathbf{A}(b)=(a_{ij}^{b})$ is
consistent. The proof is straightforward, since from $a_{ij}a_{jk}
 =  a_{ik} $ immediately follows $a_{ij}^b a_{jk}^b  =
a_{ik}^b$. Furthermore,
(\ref{soluzione-Cauchy}) is also the \emph{unique} consistency-preserving
transformation, the proof being similar to the one described above
for reciprocity.

To summarize, the only continuous transformation that intensifies
preferences and preserves reciprocity (and consistency) is
$f(a_{ij})= a_{ij}^b$ with $b>1$ and then A3 can be formalized.
\begin{property}
Define $\mathbf{A}(b) = \left( a_{ij}^{b} \right)_{n \times n}$. Then, an inconsistency index $I$ satisfies A3 if and only if
\begin{equation}
\label{A3} I ( \mathbf{A}(b) ) \geq I( \mathbf{A} ) ~~\forall b>1 ,
\; \; \forall \mathbf{A}\in \mathcal{A} .
\end{equation}
\end{property}

\begin{example}
\label{exampleA3}
Consider the following matrix
\[
\mathbf{A}=
\begin{pmatrix}
1   & 2   & 1/2 \\
1/2 & 1   & 2 \\
2   & 1/2 & 1
\end{pmatrix}.
\]
Then, modifying entries of
$\mathbf{A}$ by  means of function $f$ with exponent $b=3$ one
obtains the following matrix
\[
\mathbf{A}(3)=
\begin{pmatrix}
1^{3}   & 2^{3}   & 1/2^{3} \\
1/2^{3} & 1^{3}   & 2^{3} \\
2^{3}   & 1/2^{3} & 1^{3}
\end{pmatrix} =
\begin{pmatrix}
1   & 8   & 1/8 \\
1/8 & 1   & 8 \\
8   & 1/8 & 1
\end{pmatrix}.
\]
If an inconsistency index $I$ satisfies A3, then it must be $I(\mathbf{A}(3)) \geq
I(\mathbf{A}) $. In words, if A3 holds, then $\mathbf{A}(3)$ cannot be judged less inconsistent than
$\mathbf{A}$.
\end{example}

Note that transformation $f(a_{ij})=a_{ij}^{b}$ has been used for
other scopes. \citet{Saaty1977} himself proposed it in his
seminal paper to show that his results on consistency were general
enough to cover scales other than $[1/9,9]$. Such a function was
also employed by \citet{Herrera-Viedma2004} to
find a suitable mapping to rescale the entries of a pairwise
comparison matrix into the interval $[1/9,9]$ and by \citet{FedrizziBrunelli2009} to define
consistency-equivalence classes.

\subsection*{Axiom 4: Monotonicity on single comparisons}
\label{sec:A4} Let us consider a consistent matrix with \emph{at
least} one non-diagonal entry  $a_{pq} \neq 1$. If we increase or
decrease the value of $a_{pq}$, and modify its reciprocal $a_{qp}$
accordingly, then the resulting matrix is not anymore consistent. In
fact, in agreement with A1, the resulting matrix will have a degree
of inconsistency which exceeds that of the consistent matrix. Axiom
4 (A4) establishes a condition of monotonicity for the inconsistency
index with respect to single comparisons by requiring that the
larger the change of $a_{pq}$ from its consistent value, the more
inconsistent the resulting matrix will be. More formally, given a
consistent matrix \mbox{$\mathbf{A} \in \mathcal{A}^{*}$}, let
$\mathbf{A}_{pq}(\delta)$ be the inconsistent matrix obtained from
\textbf{A} by replacing the entry $a_{pq}$ with $a_{pq}^{\delta}$,
where $\delta \neq 1$. Necessarily, $a_{qp}$ must be replaced by
$a_{qp}^{\delta}$ to preserve reciprocity. Let
$\mathbf{A}_{pq}(\delta')$ be the inconsistent matrix obtained from
\textbf{A} by replacing entries $a_{pq}$ and $a_{qp}$ with
$a_{pq}^{\delta'}$ and $a_{qp}^{\delta'}$ respectively. A4 can then
be formulated as

\begin{equation}
\label{monotonicity}
\begin{split}
 \delta' > \delta > 1 & \Rightarrow I(\mathbf{A}_{pq}(\delta')) \geq I(\mathbf{A}_{pq}(\delta)) \\
 \delta' < \delta < 1 & \Rightarrow I(\mathbf{A}_{pq}(\delta')) \geq I(\mathbf{A}_{pq}(\delta)) \, .
\end{split}
\end{equation}
Axiom 4 can be equivalently formalized as follows.
\begin{property}
An inconsistency index $I$ satisfies A4, if and only if
$I(\mathbf{A}_{pq}(\delta))$ is a non-decreasing function of
$\delta$ for $\delta > 1$ and a non-increasing function of
$\delta$ for $\delta < 1$, for all the $\mathbf{A}\in \mathcal{A}^{*}$ and $p,q=1,\ldots,n$.
\end{property}

\begin{example}
\label{exampleA4}
Consider the consistent matrix
\[
\mathbf{A}=
\begin{pmatrix}
1     & 2   & 4 \\
1/2   & 1   & 2 \\
1/4   & 1/2 & 1
\end{pmatrix} \in \mathcal{A}^{*}.
\]
Then, choosing, for instance,
entry $a_{13}$ and changing its value and the value of its
reciprocal accordingly, we obtain
\begin{align}
\mathbf{A}'=
\begin{pmatrix}
1   & 2   & 5 \\
1/2 & 1   & 2 \\
1/5   & 1/2 & 1
\end{pmatrix}~~~
\mathbf{A}''=
\begin{pmatrix}
1   & 2   & 9 \\
1/2 & 1   & 2 \\
1/9 & 1/2 & 1
\end{pmatrix}
\end{align}
If an inconsistency index $I$ satisfies A4, then $I(\mathbf{A}'') \geq I(\mathbf{A}') \geq I(\mathbf{A})$, where the inequality between $I(\mathbf{A})$ and $I(\mathbf{A}')$ becomes strict if A1 holds. Note that, in this example, $\mathbf{A}'=\mathbf{A}_{13}(\delta)$ with $\delta = \log_{4} 5$ and $\mathbf{A}''=\mathbf{A}_{13}(\delta ')$ with $\delta ' = \log_{4} 9$.
\end{example}

Moreover, we note that A4 formalizes a property proved by \citet{AupetitGenest1993} for Saaty's Consistency Index and considered by the authors as a necessary property. Furthermore, the case of a potentially consistent matrix with one deviating comparison was considered by \citet{Bryson1995} in a property that he called `single outlier neutralization' and by \citet{ChooWedley2004} in their comparative study of methods
to elicit the weight vector. A4 is also in the spirit of other known axiomatic systems. As examples, \citet{CookKress1988} considered two matrices differing by only one comparison and \citet{KemenySnell1962} proposed a similar axiomatic assumption for the distance between rankings.

\subsection*{Axiom 5: Continuity}

As defined in (\ref{eq:inconsistency_index}), an inconsistency index
$I( \mathbf{A} )$ is a function of $\mathbf{A} \in \mathcal{A}$.
With this fifth axiom (A5), the continuity of the function is
required in the set $ \mathcal{A}$. More precisely, an index $I(
\mathbf{A} )$ is considered as a function of the ${n \choose 2}$
variables $a_{ij}, i<j$ and continuity of $I( \mathbf{A} )$ is meant
as the continuity of a function of $n \choose 2$ real variables.
Axiom 5 can be formalized as follows
\begin{property}
An inconsistency index $I( \mathbf{A} )$ satisfies A5 if and only if
it is a continuous function of the entries $a_{ij}$ of $\mathbf{A}$,
with $a_{ij}>0, a_{ij}a_{ji}=1 ~\forall i,j$.
\end{property}
The importance of continuity in mathematical modelling has origins
in the  fact that it guarantees that infinitesimal variations in the
input only generates an infinitesimal variation of the output, thus
excluding functions with `jumps'.

\subsection{Significance of the axioms}

Let us briefly discuss the \emph{necessity} of the five axioms by showing that their violation could result in an unreasonable inconsistency measurement:

\begin{itemize}
    \item If A1 is violated, two perfectly consistent matrices can have two different numerical consistency evaluations.

    \item If A2 is violated, different consistency evaluations could be associated to the same set of preferences, simply by renaming of alternatives.

    \item The effect of violation of A3 is apparent from Example \ref{exampleA3}. If A3 is not respected, matrix $\mathbf{A}(3)$, where inconsistent preferences are reinforced, could be evaluated less inconsistent than $\mathbf{A}$.

    \item Let us consider Example \ref{exampleA4} to show the necessity of A4. In the Example, matrix $\mathbf{A}''$
    clearly differs from the consistent matrix $\mathbf{A}$ more than $\mathbf{A}'$ does. As a consequence, $\mathbf{A}''$
    cannot be evaluated less inconsistent than $\mathbf{A}'$.

    \item As stated by \citet{Barzilai1998}, continuity `is a reasonable requirement of any measure of amount of
    inconsistency'. In the proof of Proposition \ref{prop:RE}, referring to Barzilai's own index,
    we will show that a discontinuous index may assign the largest
    inconsistency evaluation to a matrix which is arbitrarily close
    to a consistent one.
\end{itemize}
%
%
In spite of the reasonability of A1--A5, they could be suspected of
being too weak in order
 to characterize an inconsistency index. On the contrary, they turn out to be strictly demanding,
 since Propositions \ref{prop:RE}, \ref{prop:NI}, \ref{prop:HCI}, and \ref{prop:GW}
 will surprisingly show that they are not satisfied by four indices based on seemingly reasonable definitions.

\subsection{Logical consistency and independence}
\label{Independence}
A natural question is whether the axiomatic properties A1--A5 form
an axiomatic system or not. In fact, in an axiomatic system, the
axioms must be consistent (in a logical sense) and independent. The
existence of, at least, one index satisfying A1--A5
proves that the axiomatic system is not logically contradictory and
therefore the system is \emph{logically consistent}. Another
important result regards the \emph{independence} of the axioms.
Proving  the independence would show that the axioms are not
redundant, and therefore all of them shall be considered necessary.

\begin{theorem}
\label{th:independency} Axiomatic properties A1--A5 are logically
consistent and independent.
\end{theorem}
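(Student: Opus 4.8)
### Proof Strategy for Theorem on Logical Consistency and Independence

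The plan is to establish two things: logical consistency (exhibit one index satisfying all of A1–A5) and independence (for each $i \in \{1,\dots,5\}$, exhibit an index satisfying all axioms except $\mathrm{A}i$). Logical consistency will be the easy part: I would invoke one of the ``good'' indices whose compliance with A1–A5 is established elsewhere in the paper (the $GCI$ of Definition~\ref{def:GCI} or the $CI^{*}$ of Definition~\ref{def:CI*} are natural candidates, since these are precisely the indices the paper proves to satisfy all five axioms). A single such witness shows the system cannot derive a contradiction.

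For independence I would construct five purpose-built counterexample indices, one per axiom, each failing exactly one property. The most economical constructions are perturbations of a known good index $I^{*}$:
\begin{itemize}
\item \textbf{$\neg$A1, A2--A5 hold:} take $I_1(\mathbf{A}) = 0$ for all $\mathbf{A}$ (the constant index). It is trivially permutation-invariant, monotone (weakly) under both transformations, and continuous, but the equivalence in (\ref{A1}) fails since every matrix maps to $\nu$, not only the consistent ones.
\item \textbf{$\neg$A2, A1,A3--A5 hold:} take $I_2(\mathbf{A}) = I^{*}(\mathbf{A}) + |a_{12} - a_{21}|\cdot g(\mathbf{A})$ or, more cleanly, add a term that depends on the position of entries, e.g. $I^{*}(\mathbf{A}) + (a_{12}-1)^2$ restricted so the extra term vanishes on $\mathcal{A}^{*}$ — one must check this still vanishes exactly on consistent matrices and preserves A3--A5 while breaking symmetry. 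A cleaner route: weight the transitivity triples of $CI^{*}$ unequally by their indices.
\item \textbf{$\neg$A3, A1,A2,A4,A5 hold:} here the challenge is real — one needs an index that is symmetric, continuous, zero exactly on $\mathcal{A}^{*}$, monotone on single comparisons, yet can \emph{decrease} when all entries are raised to a power $b>1$. A candidate is a ``normalized'' version such as $I^{*}(\mathbf{A})/h(\mathbf{A})$ where $h$ grows fast enough under $\mathbf{A}\mapsto\mathbf{A}(b)$ to overwhelm the numerator; one must verify A4 survives.
\item \textbf{$\neg$A4, A1--A3,A5 hold:} take an index built from $CI^{*}$ but composed with a non-monotone continuous reparametrization of the single-entry deviation, e.g. something behaving like $\sin$-modulated growth, arranged to still be globally zero only on $\mathcal{A}^{*}$ and still monotone under the global power map.
\item \textbf{$\neg$A5, A1--A4 hold:} insert a discontinuity, e.g. $I_5(\mathbf{A}) = I^{*}(\mathbf{A})$ if $\mathbf{A}\in\mathcal{A}^{*}$ and $I^{*}(\mathbf{A}) + 1$ otherwise — this has a jump across the consistency set yet still satisfies A1 (value $\nu$ achieved only on $\mathcal{A}^{*}$ provided we shift so the infimum is attained), A2, A3, A4.
\end{itemize}

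The main obstacle is constructing the $\neg$A3 and $\neg$A4 witnesses while keeping A1 intact: A1 demands the index vanish \emph{exactly} on $\mathcal{A}^{*}$, which rigidly constrains the numerator, so the failure of monotonicity must be engineered purely through a multiplicative or compositional factor that does not create spurious zeros. I would handle this by taking $I^{*} = CI^{*}$ (whose value on a single-perturbed consistent matrix $\mathbf{A}_{pq}(\delta)$ is explicitly computable — it reduces essentially to $a_{pq}^{\delta-1} + a_{pq}^{1-\delta} - 2$ up to the binomial factor) and then multiplying by a carefully chosen continuous positive factor that is identically $1$ on $\mathcal{A}^{*}$, so no zeros are added or removed, but which oscillates or shrinks enough on the relevant one-parameter families to break the desired monotonicity. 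Verifying that the remaining four axioms genuinely survive each perturbation is the bulk of the routine work; I would organize it as a $5\times 5$ table, checking each axiom for each $I_i$, and relegate the detailed verifications to the appendix as the paper does elsewhere.
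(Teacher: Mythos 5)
Your overall strategy coincides with the paper's: establish logical consistency by exhibiting one index satisfying all five axioms (the paper uses $CI$, $CI^{*}$ and $GCI$ via Propositions \ref{prop:Saaty}, \ref{prop:CI^{*}} and \ref{prop:GCI}), and establish independence by producing, for each axiom, an index that fails it while satisfying the other four. Your witnesses for $\neg$A1 (the constant index, even simpler than the paper's $\max\{CI^{*}-1,0\}$), for $\neg$A2 (unequally weighted transitivity triples of $CI^{*}$, which is exactly the paper's $I_{2}$), and for $\neg$A5 (a jump added on the inconsistent matrices, essentially the paper's indicator index) are sound and require only routine checking.

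The gap is in the two cases you yourself flag as hard, $\neg$A3 and $\neg$A4: there you offer templates rather than witnesses. For $\neg$A3 you propose $I^{*}(\mathbf{A})/h(\mathbf{A})$ with $h$ growing under $\mathbf{A}\mapsto\mathbf{A}(b)$, ``one must verify A4 survives''; for $\neg$A4 you propose a ``$\sin$-modulated'' factor multiplying $CI^{*}$, ``arranged'' to keep A3 --- but keeping A3 while destroying A4 (and vice versa) is precisely the nontrivial content, since both axioms are monotonicity requirements along closely related one-parameter families, and an oscillating factor will generically break A3 along the power-map family as well. Neither construction is completed or verified, so independence of A3 and A4 is not actually proved. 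The paper closes the A3 case with no new construction at all: by Proposition \ref{prop:HCI}, the Harmonic Consistency Index already satisfies A1, A2, A4 and A5 but not A3, so it is itself the required witness --- a resource your proposal overlooks. For A4 the paper gives the concrete index $I_{4}(\mathbf{A})=\left( \max_{i \neq j} \max \{ a_{ij},a_{ji} \} - \min_{i \neq j} \max \{ a_{ij},a_{ji} \} + \epsilon \right)\left(CI^{*}(\mathbf{A})\right)^{0.1}$, which fails A4 for a suitable $\epsilon>0$ while retaining the other axioms. To repair your argument you would either have to import these (or comparably explicit) examples and carry out the verification, or complete your own constructions; as written, the two hardest fifths of the independence claim rest on unverified sketches.
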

\begin{proof}
Propositions \ref{prop:Saaty}, \ref{prop:CI^{*}} and \ref{prop:GCI} in Section \ref{sec:results}
state that indices $CI$, $CI^{*}$ and $GCI$, respectively, satisfy all
the five properties. Then, the axiomatic properties A1--A5 are
logically consistent.
Independence of a given axiom can be shown by providing an example
of index satisfying all axioms except the one at stake. To prove the
independence of A1, one can consider the following \textit{ad hoc}
constructed index,
\[
I_{1}(\mathbf{A})=\max \left\{ CI^{*}(\mathbf{A})-1 , 0 \right\} \,
.
\]
\noindent As a consequence of Proposition \ref{prop:CI^{*}}, index
$I_{1}$ satisfies A2--A5. Nevertheless, $I_{1}$ assigns value $0$
also to some inconsistent pairwise comparison matrices, so that it
does not satisfy A1.
To prove independence of A2, one could instead consider
\[
I_{2}(\mathbf{A}) = \sum_{i=1}^{n-2} \sum_{j=i+1}^{n-1}
\sum_{k=j+1}^{n}  \left( \frac{a_{ik}}{a_{ij}a_{jk}} +
\frac{a_{ij}a_{jk}}{a_{ik}} - 2 \right) w_{ijk}
\]
with $w_{ijk}>0 ~\forall \; i,j,k, \;\; 0<i<j<k \leq n$ and $w_{ijk}
\neq w_{i'j'k'}$  for some $0<i<j<k \leq n$ and $ 0<i'<j'<k' \leq
n$. Index $I_{2}$ does not satisfy A2, due to the presence of
$w_{ijk}$. Conversely, it satisfies all the other axioms, the proof
being similar to that of Proposition \ref{prop:CI^{*}}.
\noindent Independence of A3 directly follows from Proposition
\ref{prop:HCI}.
\noindent To prove independence of A4, we propose the following
index,
\[
I_{4}(\mathbf{A})=\left( \max_{i \neq j} \max \{ a_{ij},a_{ji} \} -
\min_{i \neq j} \max \{ a_{ij},a_{ji} \} + \epsilon \right) \times
(CI^{*}(\mathbf{A}))^{0.1} \, .
\]
\noindent It can be proved that $I_{4}$ fails to satisfy A4 for a
convenient choice of $\epsilon > 0$ and $\mathbf{A}$.
Finally, it is easy to check that
\[
I_{5}(\mathbf{A})=
\begin{cases}
\text{0} & \text{if $\mathbf{A}\in \mathcal{A}^{*}$},\\
\text{1} & \text{if $\mathbf{A}\notin \mathcal{A}^{*}$}
\end{cases}
\]
satisfies axioms A1--A4, but is not continuous and therefore does not fulfill A5,
thus showing its independence.
\end{proof}

Next, we shall investigate if existing inconsistency
indices---especially those defined in the previous section---satisfy
the axioms A1--A5.

\section{On the satisfaction of the axioms}
\label{sec:results}

We first consider three inconsistency indices and prove that they
satisfy all the axioms. Only later, we shall prove that some others
do not satisfy some axioms. As anticipated in the introduction, most of the
proofs are given in the appendix in order to simplify the
description.

Saaty's $CI$ and indices $CI^{*}$ and $GCI$ satisfy the five axioms
A1--A5. We can formalize it in the following propositions.

\begin{proposition}
\label{prop:Saaty} Saaty's Consistency Index $CI$ (\ref{eq:CI})
satisfies the five axioms A1--A5.
\end{proposition}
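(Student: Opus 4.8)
The plan is to verify each of the five axioms in turn for $CI(\mathbf{A}) = (\lambda_{\max} - n)/(n-1)$, drawing on standard spectral properties of positive reciprocal matrices. For A1, I would invoke the classical Perron--Frobenius--based fact (already stated in the paragraph following Definition \ref{def:CI}) that $\lambda_{\max} \geq n$ for every $\mathbf{A} \in \mathcal{A}$, with equality if and only if $\mathbf{A}$ is consistent; hence $\nu = 0$ is the unique value taken precisely on $\mathcal{A}^{*}$, and it is the minimum value, consistent with the paper's normalization convention. For A2, the key observation is that $\mathbf{P}\mathbf{A}\mathbf{P}^{\mathrm{T}}$ is similar to $\mathbf{A}$ (since $\mathbf{P}^{-1} = \mathbf{P}^{\mathrm{T}}$ for a permutation matrix), so it has the same spectrum and in particular the same $\lambda_{\max}$; as $n$ is unchanged, $CI$ is invariant. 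For A5, $\lambda_{\max}$ is a simple root of the characteristic polynomial (again by Perron--Frobenius applied to the positive matrix $\mathbf{A}$), and a simple root of a polynomial depends continuously — indeed analytically — on the coefficients, which are themselves polynomials in the entries $a_{ij}$; therefore $CI$ is continuous on $\mathcal{A}$.

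The substantive work is in A3 and A4, since these are genuine monotonicity statements rather than regularity conditions. For A3 I would consider the one-parameter family $\mathbf{A}(b) = (a_{ij}^{b})$ and study $\lambda_{\max}(\mathbf{A}(b))$ as a function of $b \geq 1$. The natural route is to appeal to the result of \citet{AupetitGenest1993}, which the paper explicitly cites in the discussion of A4 as having established a monotonicity property of Saaty's index; alternatively, one can use the characterization of A3 reduced to perturbing entries (the transformation $a_{ij} \mapsto a_{ij}^{b}$ preserves reciprocity and consistency, as shown in the derivation of A3) together with a convexity/Perron-root argument: writing $\lambda_{\max}$ via the Collatz--Wielandt formula $\lambda_{\max} = \min_{x > 0} \max_i (\mathbf{A}x)_i / x_i$ or its max-min dual, one shows the Perron root is non-decreasing under the intensification $b \mapsto a_{ij}^b$. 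For A4, one starts from a consistent $\mathbf{A} \in \mathcal{A}^{*}$ and perturbs a single pair $(p,q)$ to $a_{pq}^{\delta}, a_{qp}^{\delta}$; here the cleanest argument is again to cite \citet{AupetitGenest1993}, who (as the paper states) proved exactly that $CI$ increases as a single comparison moves away from its consistent value, or to give a direct computation: for a rank-one-plus-perturbation matrix of this special form, $\lambda_{\max}$ can be written fairly explicitly and shown to be a non-decreasing function of $\delta$ for $\delta > 1$ and non-increasing for $\delta < 1$, by symmetry in $\delta \leftrightarrow 1/\delta$ after accounting for the reciprocal structure.

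I expect A3 to be the main obstacle. A2 and A5 are immediate from similarity-invariance and simplicity of the Perron root; A1 is the textbook spectral characterization of consistency; A4 is essentially the Aupetit--Genest result, which the paper has already flagged. But A3 concerns a nonlinear deformation of all entries simultaneously, and controlling $\lambda_{\max}$ under $a_{ij} \mapsto a_{ij}^{b}$ is not a routine perturbation: one must either reduce it to repeated single-entry perturbations (using A4-type monotonicity and a path argument, which requires care because intermediate matrices need not be consistent) or find a global convexity argument for the Perron root along the family $\mathbf{A}(b)$. My plan would be to present A1, A2, A5 quickly, then handle A4 by citing \citet{AupetitGenest1993}, and devote the bulk of the argument to A3, most likely deferring the detailed computation to the appendix as the paper indicates it does for several proofs.
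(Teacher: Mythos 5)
Your treatment of A1, A2, A4 and A5 matches the paper's: A1 is the classical $\lambda_{\max}\geq n$ characterization due to Saaty, A2 follows from similarity invariance of the spectrum under $\mathbf{P}\mathbf{A}\mathbf{P}^{\mathrm{T}}$, A4 rests on the Aupetit--Genest result (the paper uses their increasing/decreasing/U-shaped trichotomy for $\lambda_{\max}$ as a function of a single entry, plus the fact that the minimum value $n$ is attained at the consistent value, to conclude the U-shape with minimum at $a_{pq}^{\delta}=a_{pq}$), and A5 is continuous dependence of the eigenvalue on the entries.

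The genuine gap is A3, and you have flagged it yourself without closing it. None of the three routes you sketch works as stated. Aupetit--Genest concerns variation of a \emph{single} off-diagonal entry and says nothing about the simultaneous deformation $a_{ij}\mapsto a_{ij}^{b}$ of all entries. A path of single-entry perturbations cannot be controlled by A4-type monotonicity, because A4 only governs perturbations of a \emph{consistent} starting matrix, and in A3 the matrix $\mathbf{A}$ (and every intermediate matrix along the path) is in general inconsistent. The Collatz--Wielandt idea, ``the Perron root is non-decreasing under the intensification,'' presumes entrywise monotonicity of the map $b\mapsto\mathbf{A}(b)$, which fails: for $b>1$ the entries with $a_{ij}<1$ strictly \emph{decrease}, so no entrywise-domination argument applies. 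The missing idea, which is the substantive content of the paper's proof, is Kingman's theorem (1961): if the entries of a matrix are log-convex functions of a parameter, then its Perron root is log-convex, hence convex, in that parameter. Since $a_{ij}^{b}=e^{b\ln a_{ij}}$ is log-convex in $b$, the function $b\mapsto\lambda_{\max}(\mathbf{A}(b))$ is convex; it attains its minimum value $n$ at $b=0$ because $\mathbf{A}(0)=(1)_{n\times n}$ is consistent, so it is non-decreasing on $[0,\infty)$ and in particular for $b\geq 1$, whence $CI(\mathbf{A}(b))\geq CI(\mathbf{A})$ as an increasing affine transform of $\lambda_{\max}$. Without this convexity input (or an equivalent global argument), your proof of A3 does not go through.
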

%
%
\begin{proposition}
\label{prop:CI^{*}}
Index $CI^{*}$ satisfies the five axioms A1--A5.
\end{proposition}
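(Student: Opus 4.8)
The plan is to verify each of the five axioms in turn for the index $CI^{*}$, exploiting the explicit closed form (\ref{svil_det_3X3}). Write $d_{ijk}(\mathbf{A}) = \frac{a_{ik}}{a_{ij}a_{jk}} + \frac{a_{ij}a_{jk}}{a_{ik}} - 2$ for each triple $i<j<k$, so that $CI^{*}(\mathbf{A}) = \binom{n}{3}^{-1}\sum_{i<j<k} d_{ijk}(\mathbf{A})$. The key elementary observation, used repeatedly, is that for any $t>0$ one has $t + 1/t - 2 = (\sqrt{t}-1/\sqrt{t})^{2} \geq 0$, with equality if and only if $t=1$; here $t = a_{ik}/(a_{ij}a_{jk})$ measures the multiplicative violation of transitivity on the triple $\{i,j,k\}$.

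For A1 I would take $\nu = 0$. Since each $d_{ijk}\geq 0$, we have $CI^{*}(\mathbf{A})\geq 0$ always, and $CI^{*}(\mathbf{A}) = 0$ iff every $d_{ijk} = 0$ iff $a_{ik}=a_{ij}a_{jk}$ for all $i<j<k$. It then remains to observe that this last condition is equivalent to full consistency $a_{ik}=a_{ij}a_{jk}$ for \emph{all} $i,j,k$ (not just ordered triples): reciprocity handles permutations of a triple, and the case of a repeated index is automatic from $a_{ii}=1$. Hence $CI^{*}(\mathbf{A})=0 \Leftrightarrow \mathbf{A}\in\mathcal{A}^{*}$, giving A1. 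For A2, note that a row–column permutation $\mathbf{P}\mathbf{A}\mathbf{P}^{\mathrm{T}}$ merely relabels indices, so the multiset $\{d_{ijk} : i<j<k\}$ is unchanged (each unordered triple of alternatives contributes the same value, since $d_{ijk}$ is symmetric in its three arguments up to the reciprocity relations); the sum and the binomial normalization are therefore invariant. A5 is immediate: each $d_{ijk}$ is a rational function of the entries with denominators that are products of strictly positive quantities, hence continuous on $\mathcal{A}$, and a finite sum of continuous functions is continuous.

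For A3, observe that passing from $\mathbf{A}$ to $\mathbf{A}(b)$ replaces $t_{ijk} := a_{ik}/(a_{ij}a_{jk})$ by $t_{ijk}^{\,b}$. So I would reduce A3 to the one-variable claim that $g(b) := t^{b} + t^{-b} - 2$ is non-decreasing in $b$ for $b\geq 1$, for every fixed $t>0$. This is clear: if $t=1$ then $g\equiv 0$; if $t\neq 1$, writing $t = e^{s}$ with $s\neq 0$ gives $g(b) = 2(\cosh(bs)-1)$, and $\cosh(bs)$ is increasing in $b$ for $b>0$ since $bs$ moves away from $0$ in absolute value. Summing over triples and dividing by $\binom{n}{3}$ preserves the inequality, so $CI^{*}(\mathbf{A}(b))\geq CI^{*}(\mathbf{A})$ for all $b>1$.

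A4 is the step I expect to require the most care, though it is still essentially a one-variable monotonicity argument. Start from a consistent $\mathbf{A}\in\mathcal{A}^{*}$ and form $\mathbf{A}_{pq}(\delta)$ by replacing $a_{pq}\mapsto a_{pq}^{\delta}$, $a_{qp}\mapsto a_{qp}^{\delta}$. Only those triples $\{i,j,k\}$ containing both $p$ and $q$ have their $d$-value changed; for such a triple, say with third index $r$, consistency of $\mathbf{A}$ makes the perturbed ratio equal to $a_{pq}^{\delta}/a_{pq} = a_{pq}^{\,\delta-1}$ (up to sign of the exponent depending on the cyclic position of $p,q,r$), so the triple contributes $a_{pq}^{\,\delta-1} + a_{pq}^{\,-(\delta-1)} - 2$, which by the $\cosh$ argument above is non-decreasing in $\delta$ for $\delta>1$ and non-increasing for $\delta<1$ (it is symmetric under $\delta - 1 \mapsto -(\delta-1)$, i.e.\ about $\delta=1$, and when $a_{pq}=1$ the term is identically $0$, consistent with the degenerate case). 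Since every affected triple contributes a term with the same monotonicity and the unaffected triples contribute $0$, $CI^{*}(\mathbf{A}_{pq}(\delta))$ is non-decreasing for $\delta>1$ and non-increasing for $\delta<1$, which is A4. The only mild subtlety is bookkeeping the exponent signs across the $\binom{3}{?}$ cyclic positions of $\{p,q,r\}$, but since the contribution depends only on $|\text{exponent}|$ via $x + 1/x$, these signs are immaterial to the monotonicity conclusion.
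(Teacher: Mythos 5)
Your proposal is correct and follows essentially the same route as the paper's proof: decompose $CI^{*}$ over triples, get A1 from $t+1/t-2\geq 0$ with equality iff $t=1$, A2 from relabelling invariance of the triple terms, A3 and A4 from the monotonicity of $x^{b}+x^{-b}-2$ in the exponent (the paper checks the sign of $\partial g/\partial b$ and of $\partial H/\partial\delta$ case by case, where you use the $\cosh$ rewriting --- a purely cosmetic simplification), and A5 from continuity of a finite sum of continuous functions on positive matrices. No gaps; your extra remarks on reducing full consistency to ordered triples and on the irrelevance of the exponent sign in A4 are sound.
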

%
%
\begin{proposition}
\label{prop:GCI}
The Geometric Consistency Index $GCI$ satisfies the five axioms
A1--A5.
\end{proposition}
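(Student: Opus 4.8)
The plan is to verify the five axioms A1--A5 directly for
\[
GCI(\mathbf{A})=\frac{2}{(n-1)(n-2)}\sum_{i=1}^{n-1}\sum_{j=i+1}^{n}\ln^{2}\!\left(a_{ij}\frac{w_{j}}{w_{i}}\right),
\]
recalling throughout that the weights $w_{i}=\left(\prod_{k}a_{ik}\right)^{1/n}$ come from the geometric mean method. A useful preliminary observation is that, writing $e_{ij}=\ln a_{ij}+\frac{1}{n}\sum_{k}\ln a_{jk}-\frac{1}{n}\sum_{k}\ln a_{ik}$, the residual $\ln(a_{ij}w_{j}/w_{i})$ equals $e_{ij}$, a quantity that is \emph{linear} in the log-entries $\ln a_{ij}$; this linearity is what makes A3 and A4 tractable. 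Note also $e_{ji}=-e_{ij}$, so the double sum over $i<j$ captures all off-diagonal residuals up to symmetry.

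For A1 I would argue that $GCI(\mathbf{A})=0$ iff every residual vanishes, i.e. $a_{ij}=w_{i}/w_{j}$ for all $i,j$, which by (\ref{eq:ratio}) is exactly $\mathbf{A}\in\mathcal{A}^{*}$; uniqueness of the value $\nu=0$ is immediate since $GCI\geq 0$ as a sum of squares, and this also records that $\nu$ is the minimum, consistent with the standing convention. For A2, permuting rows and columns by $\mathbf{P}$ permutes the weights correspondingly (the geometric mean is symmetric in the column index), so the multiset of residuals $\{e_{ij}\}$ is merely relabelled and the sum is unchanged. For A5, each $w_{i}$ is a continuous (indeed smooth) function of the positive entries, $a_{ij}w_{j}/w_{i}>0$, and $\ln^{2}(\cdot)$ is continuous on $(0,\infty)$, so $GCI$ is a finite sum of compositions of continuous functions and hence continuous on $\mathcal{A}$.

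The two monotonicity axioms are the substantive part. For A3, replacing $\mathbf{A}$ by $\mathbf{A}(b)$ multiplies every $\ln a_{ij}$ by $b$, hence by linearity multiplies every residual $e_{ij}$ by $b$, so $GCI(\mathbf{A}(b))=b^{2}\,GCI(\mathbf{A})\geq GCI(\mathbf{A})$ for $b>1$; this is clean. For A4, start from a consistent $\mathbf{A}$ and form $\mathbf{A}_{pq}(\delta)$ by replacing $a_{pq}$ with $a_{pq}^{\delta}$ (and $a_{qp}$ accordingly); I would set $t=(\delta-1)\ln a_{pq}$, the additive perturbation to $\ln a_{pq}$, and track how each residual $e_{ij}$ changes. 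Since the perturbation enters only rows/columns $p$ and $q$, only a controlled set of residuals is affected, and each affected residual becomes an affine function $e_{ij}(t)=c_{ij}t+d_{ij}$ with $d_{ij}=0$ (the original matrix being consistent); thus $GCI$ becomes a nonnegative quadratic in $t$ with no linear term, i.e. $GCI=Kt^{2}$ for some constant $K\geq 0$ depending only on $n$, whence it is non-decreasing in $|t|$, equivalently non-decreasing in $\delta$ for $\delta>1$ and non-increasing for $\delta<1$. The main obstacle is the A4 bookkeeping: one must check carefully that the cross terms cancel so that no linear-in-$t$ term survives, which is where consistency of the base matrix $\mathbf{A}$ (all $d_{ij}=0$) is essential; once that cancellation is confirmed the conclusion follows from the shape $Kt^{2}$. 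Since much of this is routine, I would relegate the detailed computation for A4 to the appendix, as the authors indicate they do for several proofs.
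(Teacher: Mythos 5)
Your proof is correct, and for A1, A2, A3 and A5 it coincides with the paper's argument (the paper also obtains $GCI(\mathbf{A}(b))=b^{2}GCI(\mathbf{A})$ via $w_{i}(\mathbf{A}(b))=w_{i}(\mathbf{A})^{b}$, and handles A2 by rewriting the index as a full double sum). Where you genuinely diverge is A4: the paper does not work with the geometric-mean residuals at all, but invokes the result of Brunelli, Critch and Fedrizzi that $GCI$ is proportional to $\sum_{i<j<k}\left(\log_{9}a_{ik}a_{kj}a_{ji}\right)^{2}$, observes that for $\mathbf{A}_{pq}(\delta)$ every nonzero triad term equals $\left((\delta-1)\log_{9}a_{pq}\right)^{2}$, and differentiates in $\delta$. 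Your route is self-contained (no external proportionality lemma) and exploits the linearity of the residuals $e_{ij}$ in the log-entries; the paper's route trades that bookkeeping for a citation. Your only gap is that you leave the A4 "bookkeeping" to an appendix, and your worry about cancelling cross terms is misplaced: since the base matrix is consistent, every residual has zero constant term, so each squared residual is already purely quadratic in $t=(\delta-1)\ln a_{pq}$ and there is nothing to cancel. Concretely, the perturbed residuals are $e'_{pq}=-e'_{qp}=\frac{n-2}{n}\,t$ and $e'_{ij}=\pm\frac{t}{n}$ at the $4(n-2)$ positions sharing exactly one index with $\{p,q\}$, all others vanishing, which gives $GCI(\mathbf{A}_{pq}(\delta))=\frac{2}{n(n-1)}\,t^{2}$; this is non-decreasing in $\delta$ for $\delta>1$ and non-increasing for $\delta<1$ (trivially constant if $a_{pq}=1$), so A4 follows exactly as you claim, and in fact your explicit constant $K=\frac{2}{n(n-1)}$ is slightly more informative than the paper's sign-of-derivative argument.
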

%

Let us now consider Barzilai's inconsistency index $RE$ and
formulate the following interesting result,
\begin{proposition}
\label{prop:RE_inv} Let $\mathbf{A}\in \mathcal{A}$ and $
\mathbf{A}(b) = (a_{ij}^{b})_{n \times n} $. Then
$RE(\mathbf{A})=RE(\mathbf{A}(b))~\forall b \neq 0$ and therefore $RE$ is
invariant w.r.t. $f(a_{ij})=a_{ij}^{b}~\forall b>0$.
\end{proposition}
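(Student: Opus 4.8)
The plan is to show that the formula defining $RE(\mathbf{A})$ is homogeneous of degree zero in the collection of $\log a_{ij}$, so that replacing each $a_{ij}$ by $a_{ij}^b$ (which multiplies every $\log a_{ij}$ by the constant $b$) leaves the value unchanged. Write $\ell_{ij} = \log a_{ij}$, so that $\log a_{ij}^{b} = b\,\ell_{ij}$. The numerator of the fraction in (\ref{eq:RE}) is
\[
N(\mathbf{A}) = \sum_{i=1}^{n}\sum_{j=1}^{n}\left( \frac{1}{n}\sum_{k=1}^{n} \ell_{ik} - \frac{1}{n}\sum_{k=1}^{n}\ell_{jk} \right)^{2},
\]
which is a sum of squares of linear forms in the $\ell_{ij}$, hence homogeneous of degree $2$; the denominator $D(\mathbf{A}) = \sum_{i,j}\ell_{ij}^{2}$ is likewise homogeneous of degree $2$. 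Therefore $N(\mathbf{A}(b)) = b^{2} N(\mathbf{A})$ and $D(\mathbf{A}(b)) = b^{2} D(\mathbf{A})$, so the ratio $N/D$ is unchanged for every $b \neq 0$, and consequently $RE(\mathbf{A}(b)) = 1 - N(\mathbf{A}(b))/D(\mathbf{A}(b)) = 1 - N(\mathbf{A})/D(\mathbf{A}) = RE(\mathbf{A})$.

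The one genuinely delicate point is the degenerate case $\mathbf{A} = (1)_{n\times n}$, where the denominator $D$ vanishes and $RE$ is defined separately to be zero. I would handle this by noting that $\mathbf{A} = (1)_{n\times n}$ if and only if all $\ell_{ij} = 0$, and that for $b \neq 0$ we have $a_{ij}^{b} = 1$ for all $i,j$ precisely when $a_{ij} = 1$ for all $i,j$; hence $\mathbf{A} = (1)_{n\times n} \iff \mathbf{A}(b) = (1)_{n\times n}$, and in that case both sides of the claimed equality equal $0$. For $\mathbf{A} \neq (1)_{n\times n}$ we have $D(\mathbf{A}) > 0$ and $D(\mathbf{A}(b)) = b^{2}D(\mathbf{A}) > 0$, so the ratio computation above is valid.

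The final sentence of the statement — that $RE$ is invariant with respect to the reciprocity-preserving map $f(a_{ij}) = a_{ij}^{b}$ for all $b > 0$ — is then immediate, since $b > 0$ is a special case of $b \neq 0$. I would also remark in passing that this is exactly what makes $RE$ violate A3: for any inconsistent $\mathbf{A}$ and any $b > 1$ we get $RE(\mathbf{A}(b)) = RE(\mathbf{A})$ rather than a strictly larger value, which, while not contradicting the inequality in (\ref{A3}) by itself, foreshadows the failure established in the subsequent proposition about $RE$. I do not anticipate any real obstacle here; the proof is essentially the observation about homogeneity plus careful bookkeeping of the excluded matrix $(1)_{n \times n}$.
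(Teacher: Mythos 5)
Your proof is correct and takes essentially the same route as the paper: both arguments substitute $a_{ij}^{b}$ into (\ref{eq:RE}), factor $b^{2}$ out of the numerator and the denominator, and cancel, your homogeneity phrasing and your explicit handling of the degenerate matrix $(1)_{n\times n}$ being only minor additions. One caveat on your closing aside: the invariance actually shows that $RE$ \emph{satisfies} A3 (equality fulfills the inequality in (\ref{A3})), and in the paper it is exploited to disprove A5 (continuity), not A3.
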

%
%
\noindent Proposition \ref{prop:RE_inv} could be seen as a
restriction of A3, where an inconsistency index is required to be
\emph{invariant} under function $f(a_{ij})=a_{ij}^{b}$. Clearly,
this implies that index $RE$ satisfies A3. The general result on
index $RE$ is stated by the following proposition
\begin{proposition}
\label{prop:RE}
Index $RE$ satisfies A1--A3, but it does not satisfy A4 and A5.
\end{proposition}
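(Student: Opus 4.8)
The plan is to establish A1--A3 from the least-squares structure of $RE$ and to refute A4 and A5 with explicit $3\times 3$ examples. Write $\ell_{ij}=\log a_{ij}$, so $L=(\ell_{ij})$ is skew-symmetric, and set $v_i=\frac1n\sum_k\ell_{ik}$; these are the logarithms of the geometric-mean weights and, by skew-symmetry of $L$, satisfy $\sum_i v_i=0$. A short computation gives $\sum_{i,j}(v_i-v_j)\ell_{ij}=2n\sum_i v_i^{2}=\sum_{i,j}(v_i-v_j)^2$, so the relevant cross term vanishes and we obtain the orthogonal (``ANOVA'') decomposition $\sum_{i,j}\ell_{ij}^{2}=\sum_{i,j}(v_i-v_j)^2+\sum_{i,j}(\ell_{ij}-v_i+v_j)^2$. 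Consequently, for $\mathbf{A}\neq(1)_{n\times n}$ one has $RE(\mathbf{A})=\bigl(\sum_{i,j}(\ell_{ij}-v_i+v_j)^2\bigr)\big/\bigl(\sum_{i,j}\ell_{ij}^{2}\bigr)\in[0,1]$, and $RE(\mathbf{A})=0$ if and only if $\ell_{ij}=v_i-v_j$ for all $i,j$, i.e.\ $a_{ij}=e^{v_i}/e^{v_j}$ for all $i,j$, i.e.\ $\mathbf{A}\in\mathcal{A}^{*}$. Combined with the convention $RE((1)_{n\times n})=0$ and the fact $(1)_{n\times n}\in\mathcal{A}^{*}$, this yields A1 with $\nu=0$; uniqueness of $\nu$ is automatic since $\mathcal{A}^{*}\neq\emptyset$.

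Axiom A2 is immediate: replacing $\mathbf{A}$ by $\mathbf{P}\mathbf{A}\mathbf{P}^{\mathrm{T}}$ relabels both the $\ell_{ij}$ and the $v_i$ by the same permutation, and $RE$ depends on them only through the fully symmetric sums $\sum_{i,j}\ell_{ij}^{2}$ and $\sum_{i,j}(v_i-v_j)^2$ (the matrix $(1)_{n\times n}$ being fixed by every $\mathbf{P}$). Axiom A3 follows directly from Proposition~\ref{prop:RE_inv}, which already yields $RE(\mathbf{A}(b))=RE(\mathbf{A})$ for all $b\neq 0$, hence in particular $RE(\mathbf{A}(b))\geq RE(\mathbf{A})$ for $b>1$.

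To refute A4, I would take the consistent $3\times 3$ matrix $\mathbf{A}$ with weight vector $(2,1,1)$ --- so $a_{12}=a_{13}=2$ and $a_{23}=1$ --- and perturb the single entry $a_{12}$. Feeding the one-parameter family $\mathbf{A}_{12}(\delta)$ through the formulas above (the numerator, the denominator, and their difference are all quadratic polynomials in $\delta$) gives $RE(\mathbf{A}_{12}(\delta))=(\delta-1)^2\big/\bigl(3(\delta^{2}+1)\bigr)$. This equals $0$ at $\delta=1$, as A1 requires, but its derivative $2(\delta^{2}-1)\big/\bigl(3(\delta^{2}+1)^{2}\bigr)$ is strictly positive on $(-\infty,-1)$, so $\delta\mapsto RE(\mathbf{A}_{12}(\delta))$ is not non-increasing on $(-\infty,1)$; concretely $RE(\mathbf{A}_{12}(-2))=\tfrac{3}{5}<\tfrac{2}{3}=RE(\mathbf{A}_{12}(-1))$ although $-2<-1<1$, which contradicts~(\ref{monotonicity}). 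Hence $RE$ violates A4.

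To refute A5, I would exhibit a discontinuity at $(1)_{n\times n}$. Let $\mathbf{C}_{\varepsilon}$ be the $3\times 3$ matrix with log-entries $\ell_{12}=\ell_{23}=\ell_{31}=\varepsilon$ (the remaining entries fixed by skew-symmetry). Then every row of $L$ sums to zero, so all $v_i=0$, the numerator of $RE$ vanishes, and $RE(\mathbf{C}_{\varepsilon})=1$ for every $\varepsilon\neq 0$; yet $\mathbf{C}_{\varepsilon}\to(1)_{3\times 3}$ entrywise as $\varepsilon\to 0^{+}$, while $RE((1)_{3\times 3})=0$. So $RE$ is discontinuous, and in fact assigns its maximal value to matrices arbitrarily close to a consistent one (for general $n$ replace the $3$-cycle by an $n$-cycle). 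The step I expect to be the main obstacle is A4: one has to notice that $\delta\mapsto RE(\mathbf{A}_{pq}(\delta))$ is actually monotone on each of $(0,1)$ and $(1,\infty)$ separately (this uses $(v_p-v_q)^2<n\sum_i v_i^{2}$, valid for $n\geq 3$), so that the failure occurs genuinely in the preference-reversal regime $\delta<0$ --- which (\ref{monotonicity}) nonetheless covers --- and one must be careful with the sign conventions there.
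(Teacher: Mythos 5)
Your proposal is correct, and it reaches the paper's conclusion by a partly different route. A3 is handled exactly as in the paper, via Proposition~\ref{prop:RE_inv}; your A5 argument is also the same mechanism as the paper's (a discontinuity at $(1)_{n\times n}$): your cyclic family $\mathbf{C}_{\varepsilon}$ is just $\mathbf{A}(b)$ with $\mathbf{A}$ a fixed cyclic matrix and $b=\varepsilon$, so it is a special case of the paper's limit argument, which in addition observes that \emph{every} value of $RE$, not only the maximal one, occurs arbitrarily close to $(1)_{n\times n}$. For A1 you add value: the paper merely refers to Barzilai's remark, whereas your orthogonal decomposition $\sum_{i,j}\ell_{ij}^{2}=\sum_{i,j}(v_i-v_j)^2+\sum_{i,j}(\ell_{ij}-v_i+v_j)^2$ (the cross term indeed vanishes by skew-symmetry) gives a self-contained proof that $RE\in[0,1]$ with $RE=0$ exactly on $\mathcal{A}^{*}$. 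The real divergence is A4: the paper works in Barzilai's additive formulation, perturbs an arbitrary entry of an arbitrary consistent matrix by $x$ and derives $RE=H_n\,x^2/(x^2+2\alpha x+K)$, concluding non-monotonicity on $(0,\infty)$ in general; you instead compute the closed form $RE(\mathbf{A}_{12}(\delta))=(\delta-1)^2/\bigl(3(\delta^{2}+1)\bigr)$ for one $3\times3$ family and exhibit the numerical violation $RE(\mathbf{A}_{12}(-2))=\tfrac35<\tfrac23=RE(\mathbf{A}_{12}(-1))$ (I checked the formula and the numbers; they are right). Your counterexample lives in the reversal regime $\delta<0$, which the formal statement of A4 and (\ref{monotonicity}) indeed cover, so it is a legitimate refutation; moreover, translating the paper's additive turning point $x=-K/\alpha$ into the $\delta$-parametrization gives $\delta=1-K/\alpha^{2}\le 0$, so the paper's violation occurs in the same regime and your closing caveat about monotonicity on $(0,1)$ and $(1,\infty)$ separately is consistent with the paper's general formula. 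In short, the paper's A4 argument buys generality (any consistent matrix, any entry, with the location of the non-monotonicity made explicit), while yours buys a concrete, easily verifiable counterexample at the cost of relying on the axiom's coverage of $\delta<0$.
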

%
%
%

\noindent Note that Proposition \ref{prop:RE} disproves the
continuity of $RE$ claimed in the original paper by \citet{Barzilai1998}.

The following proposition concerns index $NI_{n}^{\sigma}$,
introduced by \citet{RamikKorviny2010}, see
definition \ref{def:NI}. It remains unproved whether
$NI_{n}^{\sigma}$ satisfies A3 or not.

\begin{proposition}
\label{prop:NI} Index $NI_{n}^{\sigma}$ satisfies axioms A1, A2 and
A5 but it does not satisfy A4.
\end{proposition}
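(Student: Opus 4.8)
# Proof Proposal for Proposition \ref{prop:NI}

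The plan is to verify each of the five axioms in turn for the index $NI_n^{\sigma}$, establishing the three positive claims (A1, A2, A5) by direct arguments and then exhibiting an explicit counterexample for A4. I would handle the positive claims first, since they are mostly routine, and reserve the bulk of the effort for the failure of A4, which I expect to be the main obstacle.

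\textbf{The positive axioms.} For A1, observe that the geometric-mean weights satisfy $w_i/w_j = a_{ij}$ for every $i,j$ precisely when $\mathbf{A}$ is consistent; hence $\max_{i,j}|w_i/w_j - a_{ij}| = 0$ iff $\mathbf{A}\in\mathcal{A}^{*}$, and since $\gamma_n^{\sigma}>0$ this gives $NI_n^{\sigma}(\mathbf{A}) = 0$ iff $\mathbf{A}\in\mathcal{A}^{*}$, so A1 holds with $\nu = 0$. For A2, note that the geometric-mean weights transform equivariantly under a simultaneous row–column permutation (permuting alternatives just relabels the $w_i$), so the multiset $\{\,|w_i/w_j - a_{ij}| : i,j\,\}$ is unchanged, the maximum is unchanged, and $\gamma_n^{\sigma}$ depends only on $n$ and $\sigma$; hence $NI_n^{\sigma}$ is permutation-invariant. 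For A5, each $w_i = (\prod_j a_{ij})^{1/n}$ is a continuous (indeed smooth) positive function of the entries, so each $|w_i/w_j - a_{ij}|$ is continuous, a finite maximum of continuous functions is continuous, and multiplying by the constant $\gamma_n^{\sigma}$ preserves continuity; thus A5 holds.

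\textbf{Failure of A4.} The idea is to exploit the fact that $NI_n^{\sigma}$ records only the single largest deviation $|w_i/w_j - a_{ij}|$, and that as one entry $a_{pq}$ is driven away from its consistent value the \emph{location} of the maximizing pair can shift, and more importantly the geometric-mean weights $w_i, w_j$ themselves move in a way that need not be monotone in the relevant difference. Concretely I would start from a small consistent matrix — $n=3$ is the natural choice — say $\mathbf{A}\in\mathcal{A}^{*}$ with $a_{12}=a_{23}=\alpha$, $a_{13}=\alpha^2$ for a suitable $\alpha>1$, and perturb a single entry, e.g. replace $a_{13}$ by $a_{13}^{\delta}$ (and $a_{31}$ accordingly), keeping $\sigma$ large enough that all entries of $\mathbf{A}_{13}(\delta)$ remain in $[1/\sigma,\sigma]$. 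One then computes the geometric-mean weights of $\mathbf{A}_{13}(\delta)$ explicitly as functions of $\delta$ (they are simple powers of $\alpha$ with exponents linear in $\delta$), forms the finitely many differences $|w_i/w_j - a_{ij}^{(\delta)}|$, takes their maximum $M(\delta)$, and checks that $\delta\mapsto M(\delta)$ is \emph{not} monotone on $\delta>1$: for small perturbations the maximum deviation grows, but as $\delta$ increases further the weight ratio $w_i/w_j$ can ``chase'' the perturbed entry in a way that causes the controlling difference to decrease (or a different, smaller, difference to take over as the maximum). Since $\gamma_n^{\sigma}$ is a fixed positive constant for fixed $n,\sigma$, non-monotonicity of $M(\delta)$ transfers to $NI_n^{\sigma}(\mathbf{A}_{13}(\delta))$, contradicting A4.

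\textbf{Main obstacle.} The delicate part is choosing the triple $(n,\sigma,\alpha)$ and the perturbed entry so that the computation is clean \emph{and} the non-monotonicity is genuine rather than an artifact of leaving the admissible box $[1/\sigma,\sigma]$; one must verify that $\mathbf{A}_{13}(\delta)$ stays admissible over the whole range of $\delta$ considered, and that the case distinction in the definition of $\gamma_n^{\sigma}$ does not interfere (it does not, since $\gamma_n^{\sigma}$ is constant as $\delta$ varies). I expect that a single well-chosen numerical instance — exhibiting two explicit matrices $\mathbf{A}_{13}(\delta)$ and $\mathbf{A}_{13}(\delta')$ with $\delta'>\delta>1$ but $NI_n^{\sigma}(\mathbf{A}_{13}(\delta')) < NI_n^{\sigma}(\mathbf{A}_{13}(\delta))$ — will suffice, and I would present it in that form to keep the argument transparent.
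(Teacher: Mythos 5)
Your treatment of A1, A2 and A5 is fine and coincides with the paper, which simply records these as straightforward (A1, A2) and as a consequence of continuity of the maximum (A5). The substance of the proposition is the failure of A4, and there your proposal has a genuine gap: the decisive step is never carried out. You describe the mechanism you \emph{expect} (``the weight ratio can chase the perturbed entry'') and promise a well-chosen numerical instance, but you neither exhibit it nor verify that the non-monotonicity actually occurs for the family you fix. Worse, for that family the non-monotonicity does \emph{not} occur where you say it does. Take $a_{12}=a_{23}=\alpha$, $a_{13}=\alpha^{2}$ with $\alpha>1$ and replace $a_{13}$ by $t=(\alpha^{2})^{\delta}$. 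For $\delta>1$ (i.e.\ $t>\alpha^{2}$) every one of the six deviations $|w_i/w_j-a_{ij}|$ is increasing in $t$ except the $(3,1)$ one, and that one is easily dominated by the $(1,3)$ deviation on the range where it decreases; hence $\max_{i,j}|w_i/w_j-a_{ij}|$ is increasing for $\delta>1$ and no violation can be extracted there. (With $\alpha=3$ and $\sigma=9$ you cannot even take $\delta>1$ without leaving $[1/\sigma,\sigma]$.) The violation you need lives on the other branch of A4, $\delta<1$: with $\alpha=\beta=3$, $\sigma=9$, the weights are $w_1=(3t)^{1/3}$, $w_2=1$, $w_3=(3t)^{-1/3}$, and at $t=8/3$ (i.e.\ $\delta=\log_9 \frac{8}{3}\approx 0.45$) the maximal deviation is $|(3t)^{2/3}-t|=4/3$, while at $t=7/4$ (i.e.\ $\delta'=\log_9\frac{7}{4}\approx 0.25<\delta$) all six deviations are below $1.28$; since $\gamma_n^{\sigma}$ is a constant, $NI_3^{9}(\mathbf{A}_{13}(\delta'))<NI_3^{9}(\mathbf{A}_{13}(\delta))$ with $\delta'<\delta<1$, contradicting (\ref{monotonicity}). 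So your plan is salvageable, but as written the step ``check that $M(\delta)$ is not monotone on $\delta>1$'' would fail, and no completed counterexample is on the table.

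For comparison, the paper establishes the failure of A4 with a $4\times 4$ consistent matrix (\ref{matr:NI}), perturbing $a_{14}$ away from its consistent value $1/9$ and observing (Figure \ref{fig:NI}) that $NI_4^{9}$ is not monotone in $a_{14}$ for $a_{14}>1/9$, e.g.\ its value at $a_{14}=2$ is smaller than at $a_{14}=0.5$. Your idea of an explicit $3\times 3$ algebraic counterexample is, once corrected as above, arguably cleaner than the paper's graphical evidence, but it must actually be computed and the admissibility of all entries in $[1/\sigma,\sigma]$ checked, exactly as you anticipate in your ``main obstacle'' paragraph.
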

\begin{proof}
 The proof that $NI_{n}^{\sigma}$ satisfies axioms A1 and A2 is straightforward.
 To prove that $NI_{n}^{\sigma}$ does not satisfy A4, let us consider the following
consistent pairwise comparison matrix,
\begin{equation}
\label{matr:NI} \mathbf{A}=
\begin{pmatrix}
1 & 1/3 & 1/3 & 1/9 \\
3 & 1 & 1 & 1/3 \\
3 & 1 & 1 & 1/3 \\
9 & 3 & 3 & 1 \\
\end{pmatrix}\in \mathcal{A}^{*}.
\end{equation}
If entry $a_{14}$ is changed, and its reciprocal $a_{41}$ varies
accordingly, then, the violation of A4 can be appreciated in Figure
\ref{fig:NI}, where $NI_{n}^{\sigma}(\mathbf{A})$ is plotted as a
function of $a_{14}$, being $n=4$ and $\sigma =9 $.
\begin{figure}[htbp]
    \centering
        \includegraphics[width=0.45\textwidth]{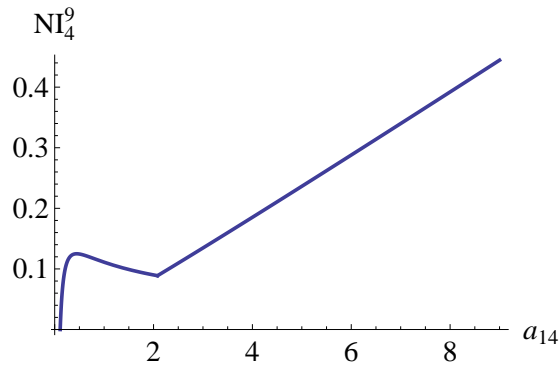}
    \caption{$NI_{4}^{9}(\mathbf{A})$ as a function of $a_{14}$}
    \label{fig:NI}
\end{figure}
In fact, A4 implies that such a function should be monotonically
increasing for $a_{14} > 1/9$, but this is not the case in this
example, since, e.g., $2 > 0.5$ but the value of
$NI_{4}^{9}(\mathbf{A})$ corresponding to $a_{14} = 2$ is smaller
than the value of $NI_{4}^{9}(\mathbf{A})$ corresponding to $a_{14}
= 0.5$.
Continuity of $NI_{n}^{\sigma}$ follows from continuity of
$\max\{\cdot , \cdot \}$, so that A5 is satisfied.
\end{proof}
\noindent We can only conjecture that the behavior described in Figure \ref{fig:NI}
is related with the fact, noted by \citet{Brunelli2011},
that index $NI_{n}^{\sigma}$ fails to identify the most
inconsistent $3 \times 3$ matrix.

We next consider another index which fails to satisfy one of the
axioms: the Harmonic Consistency Index (\ref{eq:HCI}).
\begin{proposition}
\label{prop:HCI}
Index $HCI$ satisfies A1, A2, A4 and A5 but it does not satisfy A3.
\end{proposition}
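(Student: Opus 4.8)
The plan is to handle the five axioms for $HCI$ separately, establishing the four positive claims by direct computation and then constructing an explicit counterexample for A3.

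\textbf{Positive part (A1, A2, A4, A5).} For A2, note that a row–column permutation $\mathbf{P}\mathbf{A}\mathbf{P}^{\mathrm{T}}$ merely permutes the column sums $s_1,\ldots,s_n$, and the harmonic mean $HM(\mathbf{A})$ in (\ref{eq:mediaarmonica}) is a symmetric function of $(s_1,\ldots,s_n)$; hence $HCI$ is invariant. For A5, the map $\mathbf{A}\mapsto(s_1,\ldots,s_n)$ is continuous with strictly positive components (all $a_{ij}>0$), so $\sum_j 1/s_j>0$, and $HM$, being a composition of continuous functions on the positive orthant, is continuous; thus $HCI$ is continuous on $\mathcal{A}$. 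For A1 I would invoke the key inequality underlying Stein and Mizzi's index, namely that $HM(\mathbf{A})\le n$ with equality if and only if $\mathbf{A}$ is consistent — equivalently, $HM(\mathbf{A})=n\iff$ all column sums are equal $\iff$ the columns are proportional $\iff\operatorname{rank}(\mathbf{A})=1\iff\mathbf{A}\in\mathcal{A}^{*}$ (the last chain being exactly the characterization cited for A2 in the paper and in \citet{SteinMizzi2007}). Consequently $HCI(\mathbf{A})=0$ precisely on $\mathcal{A}^{*}$, and $HCI(\mathbf{A})\le 0$ elsewhere; after the sign normalization adopted in the paper (replace $HCI$ by $-HCI$, or equivalently note the intended orientation), the unique consistency value is $\nu=0$, giving A1.

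\textbf{Monotonicity on single comparisons (A4).} Start from a consistent $\mathbf{A}\in\mathcal{A}^{*}$ and perturb a single off-diagonal entry: replace $a_{pq}$ by $a_{pq}^{\delta}$ and $a_{qp}$ by $a_{qp}^{\delta}$. Only two column sums change, namely $s_q$ (which gains $a_{pq}^{\delta}-a_{pq}$) and $s_p$ (which gains $a_{qp}^{\delta}-a_{qp}$); all other $s_j$ are fixed. Writing $HM$ as a function of the single variable $\delta$ through these two sums, I would differentiate $1/HM(\delta)=\tfrac1n\sum_j 1/s_j(\delta)$ and show the resulting expression is monotone in the required direction on $\delta>1$ and on $\delta<1$, so that $HM(\delta)$ — and hence $HCI$ — moves monotonically away from its consistent value. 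This reduces to checking the sign of a one-variable derivative, which is routine; the convexity of $t\mapsto 1/t$ together with the fact that at $\delta=1$ all column sums are proportional (so $s_p,s_q$ sit at the "balanced" configuration) makes the sign determinate. This is essentially the calculation the authors defer to the appendix.

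\textbf{Failure of A3.} The main content is the counterexample. I would exhibit a small (say $3\times 3$ or $4\times 4$) inconsistent matrix $\mathbf{A}$ together with an exponent $b>1$ such that $HCI(\mathbf{A}(b))<HCI(\mathbf{A})$ (under the paper's orientation, where larger means more inconsistent). The mechanism to exploit is that raising entries to the power $b$ changes column sums in a highly nonuniform, convexity-driven way: a large entry $a_{ij}$ grows much faster than its reciprocal shrinks, so the spread of $(s_1,\ldots,s_n)$ — and therefore the gap between $HM$ and $n$ — need not be monotone in $b$. One reliable route is to pick $\mathbf{A}$ so that for moderate $b$ the column sums actually become \emph{more} equal before eventually diverging, producing a non-monotone $HCI(\mathbf{A}(b))$; evaluating $HCI$ at $b=1$ and at one well-chosen $b>1$ then violates (\ref{A3}). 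The hard part is genuinely the search for such $\mathbf{A}$ and $b$: one must locate a region where the competing effects of inflating large entries versus deflating small ones on the harmonic mean of column sums run opposite to intuition, and verify the strict inequality numerically. Once a single instance is found, the proof of Proposition \ref{prop:HCI} is complete, and as noted in the excerpt this same example also yields the independence of A3.
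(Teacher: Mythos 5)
Your treatment of A2 and A5 is fine and matches the paper, but the proposal has genuine gaps exactly where the proposition has content. For the failure of A3 you never actually produce a counterexample: you only describe a numerical search for some $\mathbf{A}$ and $b>1$, and you yourself call the search "the hard part". Worse, the mechanism you propose to exploit rests on a false premise: making the column sums more equal does not push $HM$ towards $n$. Indeed $HM(\mathbf{A})=n$ is equivalent to $\sum_j 1/s_j=1$, not to equal column sums; the cyclic matrix with rows $(1,2,1/2)$, $(1/2,1,2)$, $(2,1/2,1)$ has all $s_j=7/2$, hence $HM=7/2>3$, while consistent matrices generally have unequal column sums. The paper avoids any search by an asymptotic argument valid on a whole class: take any inconsistent $\mathbf{A}$ having a column $j^{*}$ whose off-diagonal entries are all smaller than one; as $b\to+\infty$ every $s_j(\mathbf{A}(b))\to\infty$ except $s_{j^{*}}\to1$, so $\sum_j 1/s_j\to1$, $HM(\mathbf{A}(b))\to n$ and $HCI(\mathbf{A}(b))\to0$, which is eventually below $HCI(\mathbf{A})>0$, violating A3.

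Your A1 argument is also flawed in its details: the Stein--Mizzi inequality goes the other way, namely $\sum_j 1/s_j\le1$, i.e. $HM(\mathbf{A})\ge n$, so $HCI(\mathbf{A})\ge0$ as defined and no sign normalization is needed; and the asserted chain ``$HM=n\iff$ all column sums equal $\iff$ columns proportional'' is false in both directions (see the cyclic example above, and note that a consistent matrix with distinct weights has $s_j=\bigl(\sum_i w_i\bigr)/w_j$, which are unequal). What should be cited is simply the Stein--Mizzi result that $HCI(\mathbf{A})=0$ if and only if $\mathbf{A}\in\mathcal{A}^{*}$. Finally, for A4 your sketch leaves out the essential step: the sign of $\frac{\partial}{\partial\delta}\left(\frac{1}{s_p(\delta)}+\frac{1}{s_q(\delta)}\right)$ is not settled by convexity of $t\mapsto1/t$; it depends on comparing $a_{pq}^{\delta}/s_q^{2}$ with $a_{qp}^{\delta}/s_p^{2}$, and that comparison only resolves because the unperturbed parts of columns $p$ and $q$ of a consistent matrix are proportional, which is what reduces the inequality (as in the paper's appendix) to $a_{pq}^{\delta-1}\gtrless1$. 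Without carrying out that reduction, the ``routine'' monotonicity claim is unproven.
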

%
%
\noindent The following example derives from the proof of
proposition \ref{prop:HCI} and is aimed to clarify the behavior of
$HCI$ and to show the importance of A3.

\begin{example} %
\label{ex:HCI} %
Consider the following matrix $\mathbf{A}$ and its derived matrix $\mathbf{A}(b)=\left( a_{ij}^{b} \right)$
\[
\mathbf{A}=
\begin{pmatrix}
1   & 4 & 1/2  & 2 \\
1/4 & 1   & 1/4 & 2 \\
2   & 4 & 1 & 2 \\
1/2 & 1/2 & 1/2 & 1
\end{pmatrix}~~~~~
\mathbf{A}(b)=
\begin{pmatrix}
1^{b}   & 4^{b} & (1/2)^{b}   & 2^{b} \\
(1/4)^{b} & 1^{b}  & (1/4)^{b} & 2^{b} \\
2^{b}   & 4^{b} & 1^{b} & 2^{b} \\
(1/2)^{b} & (1/2)^{b} & (1/2)^{b} & 1
\end{pmatrix}.
\]
\noindent It is possible to illustrate the behavior of
$HCI(\mathbf{A}(b))$ by means of Figure \ref{fig:HCI}. The index
initially increases, but then it decreases and converges to full
$HCI$-consistency as $b$ grows.

\begin{figure}[htbp]
    \centering
        \includegraphics[width=0.45\textwidth]{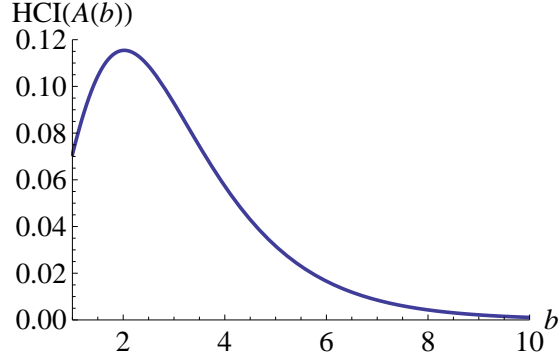}
    \caption{Index $HCI(\mathbf{A}(b))$ as a function of $b$}
    \label{fig:HCI}
\end{figure}
\end{example}

The last index considered in this section is the index $GW$ of
\citet{GoldenWang1989} and the following
proposition states the corresponding results. It remains unproved
whether $GW$ satisfies A4 or not.


\begin{proposition}
\label{prop:GW} Index $GW$ satisfies A1, A2 and A5. If the priority
vector is computed by means of the geometric mean method, then index
$GW$ does not satisfy A3.
\end{proposition}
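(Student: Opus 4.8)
The plan is to verify A1, A2 and A5 directly from Definition~\ref{def:GW}, and then to refute A3 by exhibiting an inconsistent matrix with a ``dominant row''.

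For A1 I would show that the distinguished value is $\nu=0$. If $\mathbf{A}$ is consistent, write $a_{ij}=v_i/v_j$; then the $j$-th column sum equals $S/v_j$ with $S=\sum_k v_k$, so $\bar a_{ij}=v_i/S$ does not depend on $j$, and the geometric-mean weights normalize to exactly $\bar w_i=v_i/S$ as well, so every summand in \eqref{eq:GW} vanishes and $GW(\mathbf{A})=0$. Conversely, $GW(\mathbf{A})=0$ forces $\bar a_{ij}=\bar w_i$ for all $i,j$; since $\bar w_i>0$ this gives $a_{ij}=\bar w_i s_j$, and $a_{ii}=1$ yields $s_i=1/\bar w_i$, whence $a_{ij}=\bar w_i/\bar w_j$ and $\mathbf{A}$ is consistent. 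As $GW\ge 0$ everywhere and $GW=0$ precisely on $\mathcal{A}^{*}$, $\nu=0$ is the unique real number with property \eqref{A1}. For A2, if $\sigma$ is the permutation carried by $\mathbf{P}$ then $(\mathbf{P}\mathbf{A}\mathbf{P}^{\mathrm T})_{ij}=a_{\sigma(i)\sigma(j)}$, the column sums are permuted by $\sigma$, and the geometric-mean weights satisfy $w_i(\mathbf{P}\mathbf{A}\mathbf{P}^{\mathrm T})=w_{\sigma(i)}(\mathbf{A})$ (and $\mathbf{P}\mathbf{w}$ is the Perron vector of $\mathbf{P}\mathbf{A}\mathbf{P}^{\mathrm T}$, should one prefer the eigenvector method); substituting into \eqref{eq:GW} and reindexing the double sum over $i,j$ by $\sigma$ gives $GW(\mathbf{P}\mathbf{A}\mathbf{P}^{\mathrm T})=GW(\mathbf{A})$. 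For A5, each column sum $s_j=\sum_i a_{ij}$ is a continuous, strictly positive function of the entries, hence so is each $\bar a_{ij}=a_{ij}/s_j$; the geometric mean $w_i=(\prod_j a_{ij})^{1/n}$ is continuous and positive, so each $\bar w_i$ is continuous; and $GW$ is a finite sum of absolute values of continuous functions, hence continuous on $\mathcal{A}$.

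To disprove A3 when the weights come from the geometric mean, I would take an inconsistent matrix one of whose rows strictly dominates every column, for instance
\[
\mathbf{A}=\begin{pmatrix} 1 & 2 & 4 \\ 1/2 & 1 & 3 \\ 1/4 & 1/3 & 1 \end{pmatrix},
\]
which is inconsistent (since $a_{12}a_{23}=6\neq 4=a_{13}$), so $GW(\mathbf{A})>0$ by A1, and in which $a_{1j}>a_{ij}$ for every $i\neq 1$ and every $j$. Applying $f(x)=x^{b}$ gives $\mathbf{A}(b)=(a_{ij}^{b})$; in each column $j$ the normalized entry $\bar a_{1j}^{(b)}=a_{1j}^{b}\big/\sum_i a_{ij}^{b}=\bigl(1+\sum_{i\neq 1}(a_{ij}/a_{1j})^{b}\bigr)^{-1}\to 1$ as $b\to\infty$, because $a_{1j}$ is the strict column maximum, so the normalized $j$-th column tends to the first unit vector $e_1$. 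The geometric-mean weights of $\mathbf{A}(b)$ are $g_i^{b}$ with $g_i=(\prod_j a_{ij})^{1/n}$, and since $g_1=(1\cdot 2\cdot 4)^{1/3}=2$ is the strict maximum of the $g_i$, the normalized weight vector also tends to $e_1$. Hence every summand of \eqref{eq:GW} for $\mathbf{A}(b)$ tends to $0$, i.e.\ $GW(\mathbf{A}(b))\to 0$; since $GW(\mathbf{A})>0$, there is some $b^{\ast}>1$ with $GW(\mathbf{A}(b^{\ast}))<GW(\mathbf{A})$, contradicting \eqref{A3}.

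The verifications of A1, A2 and A5 are routine bookkeeping; the only real idea is the A3 counterexample. The point is that raising all entries to a large power $b$ collapses both the column-normalized matrix and the geometric-mean priority vector onto the same vertex of the probability simplex, so $GW$ is driven down to its minimum even though the preferences are being intensified. I expect the only delicate part to be choosing $\mathbf{A}$ so that one row is simultaneously the strict maximizer of every column and the strict maximizer of the row geometric means while $\mathbf{A}$ stays genuinely inconsistent; the displayed $3\times3$ matrix meets all three requirements, after which the argument needs no explicit computation of $GW$ values beyond the positivity furnished by A1 (and, as the statement is careful to restrict the negative conclusion to the geometric-mean weights, one need not analyse the eigenvector variant here).
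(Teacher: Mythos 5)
Your proposal is correct and follows essentially the same route as the paper: the paper also refutes A3 by taking (a whole class of) inconsistent matrices whose greatest column entries all lie in one row and observing that, as $b\to\infty$, both the column-normalized matrix and the normalized geometric-mean weight vector collapse onto the same unit vector, forcing $GW(\mathbf{A}(b))\to 0$ while $GW(\mathbf{A})>0$. Your explicit $3\times 3$ instance and your detailed checks of A1, A2 and A5 (which the paper dismisses as straightforward) are fine and add nothing contradictory.
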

%
%
The following example derives from the proof of Proposition
\ref{prop:GW} and, similarly to example \ref{ex:HCI}, shows the
convergence to zero of the index $GW$.
\begin{example}
\label{ex:GW}%
Consider the pairwise comparison matrix
$\mathbf{A}(b)$ obtained from $\mathbf{A}$ as
\begin{equation}
\label{A_GW}
\mathbf{A}=
\begin{pmatrix}
1   & 3 & 1/4 & 2 \\
1/3   & 1 & 1/7  & 2 \\
4 & 7 & 1 & 6 \\
1/2 & 1/2 & 1/6 & 1
\end{pmatrix} ~~~~
\mathbf{A}(b)=
\begin{pmatrix}
1^{b}  & 3^{b} & (1/4)^{b} & 2^{b} \\
(1/3)^{b}   & 1^{b} & (1/7)^{b} & 2^{b} \\
4^{b} & 7^{b} & 1^{b} & 6^{b} \\
(1/2)^{b} & (1/2)^{b} & (1/6)^{b} & 1^{b}
\end{pmatrix}
\end{equation}
and note that the third row of $\mathbf{A}$ contains all the greatest
elements of each column. It is possible to plot the behavior of $GW$
and obtain the graph in Figure \ref{fig:GW}, which represents the
$GW$-inconsistency of matrix $\mathbf{A}(b)$ in (\ref{A_GW}).
\begin{figure}[htbp]
    \centering
        \includegraphics[width=0.45\textwidth]{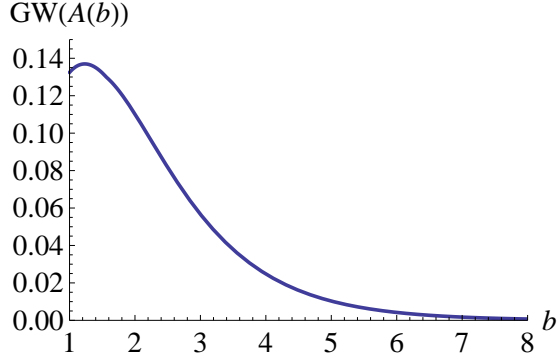}
    \caption{Index $GW(\mathbf{A}(b))$ as a function of $b$}
    \label{fig:GW}
\end{figure}
\end{example}
Finally, Table \ref{tab:summary} summarizes the findings obtained in this section.
\begin{table}[htbp]
    \small
    \centering
        \begin{tabular}{l|ccccc}
            ~                                   & A1 & A2 & A3 & A4 & A5 \\
            \hline
      $CI$ (def. \ref{def:CI})                  &  Y & Y  & Y  & Y  &  Y \\
      $GW$ (def. \ref{def:GW})                  &  Y & Y  & N  & ?  &  Y \\
      $GCI$ (def. \ref{def:GCI})                &  Y & Y  & Y  & Y  &  Y \\
      $RE$ (def. \ref{def:RE})                  &  Y & Y  & Y  & N  &  N \\
      $CI^{*}$ (def. \ref{def:CI*})             &  Y & Y  & Y  & Y  &  Y \\
      $HCI$ (def. \ref{def:HCI})                &  Y & Y  & N  & Y  &  Y \\
      $NI^{\sigma}_{n}$ (def. \ref{def:NI})     &  Y & Y  & ?  & N  &  Y

        \end{tabular}
    \caption{Summary of propositions: Y= axiom is satisfied, N= axiom is not satisfied, ?= unknown}
    \label{tab:summary}
\end{table}
\section{Discussion and Future Research}
\label{sec:open_question}
Propositions \ref{prop:NI}, \ref{prop:HCI} and \ref{prop:GW},
together with the corresponding examples, suggest that A3 and A4 are
the most demanding axioms. Let us make some other
remarks to clarify A3 and A4.
First, we propose a geometrical interpretation that could be useful to
emphasize the role of A4 in requiring the non-decreasing property of
an inconsistency index when moving away from consistency. Let us
represent a consistent matrix $\mathbf{A} \in \mathcal{A}^{*}$ as a
point in the Cartesian space $\mathbb{R}^{n(n-1)/2}$, where the
dimension ${n(n-1)/2}$ is the number of upper-diagonal elements
which are necessary and sufficient to identify a pairwise comparison
matrix of order $n$. By increasing (decreasing) entry $a_{pq}$, the
point $\mathbf{A}$ departs from set $\mathcal{A}^{*}$ and moves in
the direction of the corresponding axis. Thus, A4 requires that an
inconsistency index does not decrease whenever $\mathbf{A}$ moves
away from the initial consistent position, in
any of the ${n(n-1)/2}$ possible directions. 
On the other hand, by referring to the same geometrical
representation in the Cartesian space, the type of translation of
point $\mathbf{A}$ induced by A3 is different from the
one induced by A4, so that the joint effect is more general than
the single ones.\\

In decision problems based on pairwise comparisons there are two
phases: preference elicitation and priority vector computation. In
the previous sections we defined and studied five axioms
characterizing the inconsistency evaluation of the preferences
elicited by a decision maker independently from the method used in
deriving the priority vector. Therefore, we focused on pairwise
comparison matrix $\mathbf{A}$ and the property of transitivity
$a_{ik}=a_{ij}a_{jk}~\forall i,j,k$. Nevertheless, by considering that
consistency of a pairwise comparison matrix can equivalently be
characterized by property $a_{ij}=w_{i}/w_{j}~\forall i,j$, it is possible and
relevant to study also the relationship between the inconsistency
indices and the methods used for priority vector computation.
Other investigations will
add new insight on the relationship between the inconsistency
indices and the methods used for computing priority vectors.


\subsection{Conclusions}
The purpose of this paper was to introduce some formal order in the
topic of consistency evaluation for pairwise comparison matrices.
We proposed few and simply justifiable axiomatic properties to characterize
inconsistency indices, discovering that some indices proposed in the literature fail to satisfy some
properties. We hope that our proposal will open a debate and
stimulate further studies.



\appendix

\section*{Appendix}

\begin{proof}[Proof of Proposition \ref{prop:Saaty}] In the following we prove each axiom separately.\\
\begin{description}
    \item[A1] This was proved already by \citet{Saaty1977}.
    \item[A2] It is known that the  characteristic polynomial of a matrix $\mathbf{A}$ equals
    the characteristic polynomial of the matrix $\mathbf{S}^{-1} \mathbf{A} \mathbf{S}$ where $\mathbf{S}$
    is any non-singular matrix and $\mathbf{S}^{-1}$ its inverse (see \citet{HornJohnson1985}, p. 45).
    We also know that any permutation matrix is an orthogonal matrix,
    and therefore its inverse is its transpose.
    Thus, using the notation used in (\ref{eq:permutation}), we also know that the
    characteristic polynomial of $\mathbf{A}$ is the same of $\mathbf{P} \mathbf{A} \mathbf{P}^{\mathrm{T}}$.
    This implies that the index $CI$ remains unchanged.
    \item[A3] The proof relies on a theorem from linear algebra by \citet{Kingman1961} stating that if the elements $a_{ij}(b)$ of a matrix $\mathbf{A}$
    are logconvex functions of $b$, then the maximum eigenvalue
    $\lambda_{\max}(\mathbf{A})$ is a logconvex (and hence convex) function of $b$.
    Given a pairwise comparison matrix $\mathbf{A}=(a_{ij}) \in \mathcal{A}$,
    let us apply the preference intensifying function $f(a_{ij})=
    a_{ij}^b$. The entries of the obtained matrix
    $\mathbf{A}(b) = (a_{ij}(b))_{n \times n} = (a_{ij}^{b})_{n \times n}$
     are logconvex functions of parameter $b$ (see \citealt{BozokiFulop2010}). From the
    theorem, $\lambda_{\max}(\mathbf{A}(b))$ is a convex function
    of $b$. Moreover, $\lambda_{\max}(\mathbf{A}(b))$ reaches its minimum value $n$ for $b=0$,
    since $\mathbf{A}(0)$ is the consistent matrix with all
    entries equal to one (indifference matrix). It follows that $\lambda_{\max}(\mathbf{A}(b))$
    is a non-decreasing function for $b \geq 0$ and hence, in
    particular, for $b \geq 1$. The same applies to $CI$, being an
increasing affine transform of $\lambda_{\max}(\mathbf{A}(b))$.
Therefore, $CI$ satisfies (\ref{A3}).
    \item[A4] 
It was proved by \citet{AupetitGenest1993}, that
$\lambda_{\max}$ must be either increasing, decreasing or U-shaped
as a function of a single upper triangular entry, say $a_{pq}$, of a
positive reciprocal matrix. Let \textbf{A} be a consistent matrix,
$\mathbf{A} \in \mathcal{A}^{*}$ and let $a_{pq} \neq 1$ be an
arbitrarily fixed entry of \textbf{A}. Then it is
$\lambda_{\max}(\textbf{A}) = n$. Using the notation of section
\ref{sec:A4}, let $\textbf{A}_{pq}(\delta)$ be the inconsistent
matrix obtained from \textbf{A} by replacing $a_{pq}$ with with a
different value $a_{pq}^{\delta} >0$, being $a_{qp}$ consequently
replaced with $1/a_{pq}^{\delta}$. Then $\lambda_{\max}$ becomes
larger than $n$, $\lambda_{\max}(\textbf{A}_{pq}(\delta)) \geq n$.
That is, $\lambda_{\max}(\textbf{A}_{pq}(\delta))$ reaches its
minimum value $n$ when $a_{pq}^{\delta} = a_{pq}$. It follows
that
$\lambda_{\max}(\textbf{A}_{pq}(\delta))$ is U-shaped as a function of
$a_{pq}^{\delta}$, reaching its minimum value for
$a_{pq}^{\delta}=a_{pq}$. The same applies to $CI$, being an
increasing affine transform of $\lambda_{\max}$, which proves that
A4 is satisfied.
\item[A5] Continuity holds due to the continuous dependence of the zeroes of a polynomial on its coefficients 
\end{description}
\end{proof}

\begin{proof}[Proof of Proposition \ref{prop:CI^{*}}]
We shall proceed by proving that it satisfies each axiom:

\begin{description}
    \item[A1] This axiom is satisfied by $CI^{*}$ as is can be proved that the non-negative quantity
    \[
    \frac{a_{ik}}{a_{ij}a_{jk}} + \frac{a_{ij}a_{jk}}{a_{ik}} - 2
    \]
    reaches its minimum, which is zero, if and only if $a_{ik}=a_{ij}a_{jk}$. The proof is based on the study of the function $G(x)=x+\frac{1}{x}-2$ where $x = \frac{a_{ik}}{a_{ij}a_{jk}}$. Considering $x>0$, then $G(x)$ reaches its minimum for $x=1$. This implies $a_{ik}=a_{ij}a_{jk}$.

    \item[A2]   Observe that: (i) expression (\ref{svil_det_3X3}) considers all the triplets $\{x_{i},x_{j},x_{k} \}$ only once; (ii) given a triplet $\{x_{i},x_{j},x_{k} \}$, the terms in (\ref{svil_det_3X3}) corresponding to these indices are independent from the order of the indices, that is
\begin{equation}
\label{eq:perm}
\frac{a_{ik}}{a_{ij}a_{jk}} + \frac{a_{ij}a_{jk}}{a_{ik}} = \frac{a_{\pi(i)\pi(k)}}{a_{\pi(i)\pi(j)}a_{\pi(j)\pi(k)}} + \frac{a_{\pi(i)\pi(j)}a_{\pi(j)\pi(k)}}{a_{\pi(i)\pi(k)}}
\end{equation}
holds for any permutation map $\pi$.

\item[A3]
It is possible to prove that each term of the sum
(\ref{svil_det_3X3}) corresponding to an inconsistent triplet increases if we apply $f(a_{ij})=a^{b}_{ij},~b>1$.
\noindent In fact, applying $f$ one obtains:
\begin{equation}
\label{termine somma}
\frac{a_{ik}^b}{a_{ij}^b a_{jk}^b }+\frac{a_{ij}^b a_{jk}^b
}{a_{ik}^b }-2 = \left( \frac{a_{ik}}{a_{ij} a_{jk} } \right)^b +
\left( \frac{a_{ij} a_{jk} }{a_{ik} } \right)^b -2.
\end{equation}

\noindent It is then necessary to prove that (\ref{termine somma}) is increasing with respect to $b>1$. Let us recall that $G(x)= x + \frac{1}{x}  -2$ and $x = \frac{a_{ik}}{a_{ij}a_{jk}}$. Let
\begin{equation}
g(x,b)= x^b + \frac{1}{x^b} -2 .
\end{equation}

\noindent We then need to prove that $ g(x,b) $ is increasing with respect to $b>1$, that is

\begin{equation}
b_2 > b_1 \Rightarrow g(x,b_2) \geq g(x,b_1)
\end{equation}

\noindent for all $x>0$ . It is

\begin{equation}
\label{derivata}
\frac{\partial g(x,b)}{\partial b} = \frac{\partial x^b}{\partial b}
+ \frac{\partial x^{-b}}{\partial b} = x^{b} \ln x -x^{-b} \ln x = (
\ln x ) \left(  x^{b}  - \frac{1}{x^b} \right) .
\end{equation}

\noindent To study the sign of $ \frac{\partial
g(x,b)}{\partial b} $, one can more simply analyze $ h(z)= z- \frac{1}{z}
$ in $]0,\infty[$, where $ z=x^{b}$

\begin{equation}
h(z)= z- \frac{1}{z} =  \frac{z^2 - 1}{z} .
\end{equation}

\noindent Then $0<z<1 \Rightarrow h(z)<0$, and $z>1 \Rightarrow
h(z)>0$. Consequently, for $b>1$,

\begin{eqnarray*}
0<x<1  & \Rightarrow & \ln x <0 \\
       & \Rightarrow &  0< z=x^b <1  \Rightarrow h(z)=\left(
       x^{b}  - \frac{1}{x^b} \right) < 0 \\
       & \Rightarrow & \frac{\partial g(x,b)}{\partial b} = ( \ln x ) \Big(  x^{b}  - \frac{1}{x^b}
       \Big)>0
\end{eqnarray*}

\begin{eqnarray*}
x>1  & \Rightarrow & \ln x >0 \\
       & \Rightarrow &  z=x^b >1  \Rightarrow h(z)=\left(
       x^{b}  - \frac{1}{x^b} \right) > 0 \\
       & \Rightarrow & \frac{\partial g(x,b)}{\partial b} = ( \ln x ) \left(  x^{b}  - \frac{1}{x^b}
       \right)>0 .
\end{eqnarray*}

\noindent Finally, if $x=1$, then

\begin{equation}
\frac{\partial g(1,b)}{\partial b} = 0 .
\end{equation}
\noindent Therefore, $g(x,b)$ is increasing respect to $b>0$ for
$x>0$ $x \neq 1$. For $x=1$, representing consistent triples, we get
$g(1,b)= 0 \;\; \forall b$.

\noindent To conclude, each term of the sum
(\ref{svil_det_3X3}) corresponding to an inconsistent triplet
increases by applying function $f$ with $b>1$, hence

\begin{eqnarray*}
CI^*(\mathbf{A})=0  & \Rightarrow &   CI^{*}(\mathbf{A}(b))=CI^{*}(\mathbf{A})=0 \\
CI^*(\mathbf{A})>0  & \Rightarrow &   CI^{*}(\mathbf{A}(b))>
CI^{*}(\mathbf{A}) .
\end{eqnarray*}

\item[A4] Without loss of generality, let us fix the entry $a_{pq}$ of $\mathbf{A}$ with $p<q$, and replace $a_{pq}$ with $a_{pq}^{\delta}$ and $a_{qp}$ with $a_{qp}^{b}$. Considering index (\ref{svil_det_3X3}) for the obtained matrix $\mathbf{A}_{pq}(\delta)$, some terms of the sum contain $a_{pq}^{\delta}$, while others do not. Those not containing $a_{pq}^{\delta}$ remain unchanged.
Let us first consider the following terms of the sum:

\begin{equation}
\label{eq:sum_delta} \frac{a_{pq}^{\delta}}{a_{pj}a_{jq}} +
\frac{a_{pj}a_{jq}}{a_{pq}^{\delta}} - 2 .
\end{equation}

Since $\mathbf{A}$ is consistent, we can set $x=a_{ik}=a_{ij}a_{jk}$ and rewrite (\ref{eq:sum_delta}) as a function of $x$ and $\delta$:
\[
H(x,\delta) = \frac{x^{\delta}}{x} + \frac{x}{x^{\delta}} - 2 =
x^{\delta-1} + x^{1-\delta} - 2 .
\]
It is:
\begin{equation*}
 \frac{\partial H(x,\delta)}{\partial \delta} =x^{\delta - 1} \ln{x}  -x^{1-\delta} \ln{x} = \underbrace{ \left. x^{1- \delta} \right. }_{>0} \underbrace{(x^{2\delta-2}-1)}_{\textrm{I}}
 \underbrace{\ln{x}}_{\textrm{II}} .
\end{equation*}
Thus, as pairwise comparison matrices are positive, i.e. $x>0$, one obtains the following four cases
\[
\left.
\begin{array}{r}
\delta >1  \\
x>1
\end{array}
\right\}
~\Rightarrow~
\left\{
\begin{array}{l}
\textrm{I} > 0  \\
\textrm{II} >0
\end{array}
\right.
~\Rightarrow ~
\frac{\partial H}{\partial \delta} > 0
\] \[
\left.
\begin{array}{r}
\delta >1  \\
0<x<1
\end{array}
\right\}
~\Rightarrow~
\left\{
\begin{array}{l}
\textrm{I} < 0  \\
\textrm{II} < 0
\end{array}
\right.
~\Rightarrow ~
\frac{\partial H}{\partial \delta} > 0
\] \[
\left.
\begin{array}{r}
\delta <1  \\
x>1
\end{array}
\right\}
~\Rightarrow~
\left\{
\begin{array}{l}
\textrm{I} < 0  \\
\textrm{II} > 0
\end{array}
\right.
~\Rightarrow ~
\frac{\partial H}{\partial \delta} < 0
\] \[
\left.
\begin{array}{r}
\delta <1  \\
0<x<1
\end{array}
\right\}
~\Rightarrow~
\left\{
\begin{array}{l}
\textrm{I} > 0  \\
\textrm{II} < 0
\end{array}
\right.
~\Rightarrow ~
\frac{\partial H}{\partial \delta} < 0 \, .
\]
Hence, every term of the sum (\ref{svil_det_3X3}) with the form
(\ref{eq:sum_delta}) is an increasing function of $\delta$ for
$\delta>1$ and decreasing function for $\delta<1$, thus satisfying
(\ref{monotonicity}). The other type of terms in the sum
(\ref{svil_det_3X3}) containing $a_{pq}^{\delta}$ are those in the
form
\begin{equation}
\label{eq:sum_delta2}
\frac{a_{pk}}{a_{pq}^{\delta}a_{qk}} + \frac{a_{pq}^{\delta}a_{qk}}{a_{pk}} - 2 \, .
\end{equation}
By means of a reasoning similar to the previous one, it can be proved that also (\ref{eq:sum_delta2}) is an increasing function of $\delta$ for $\delta>1$ and decreasing function for $\delta<1$. To summarize, A4 is satisfied by index $CI^{*}$.
\item[A5] For positive matrices, function $CI^{*}$ is continuous, as it is a sum of continuous functions.
\end{description}
\end{proof}

\begin{proof}[Proof of Proposition \ref{prop:GCI}] We shall prove each axiom separately.
\begin{description}
\item[A1] It is satisfied, since it is $a_{ij}=w_{i}/w_{j} \; \forall i, j$ if and only if $\mathbf{A} \in \mathcal{A}^{*} $, with $GCI(\mathbf{A})=\nu=0$.
\item[A2] The second axiom is verifiable by rewriting the index $GCI$ as
\[
GCI(\mathbf{A})=\frac{1}{(n-1)(n-2)}\sum_{i=1}^{n}\sum_{j=1}^{n} \,
\ln^2 \left( {a_{ij}\frac{w_j}{w_i}} \right) .
\]
\item[A3]
We consider $\mathbf{A}(b)=(a^{b}_{ij})_{n \times n}$ and $w_i (\mathbf{A}(b))$ be the $i$-th weight obtained from $\mathbf{A}(b)$ by means of (\ref{eq:mediageometrica}). Then, we obtain
\[
w_i (\mathbf{A}(b))=\Bigg( \prod_{i=1}^{n}a_{ij}^b \Bigg)
^\frac{1}{n} = \Bigg( \prod_{i=1}^{n}a_{ij} \Bigg) ^\frac{b}{n} =
(w_i (\mathbf{A}))^b ,
\]
\noindent so that, considering the terms of the sum (\ref{eq:GCI})
for the matrix $\mathbf{A}(b)$, we can derive the following
\[
 \ln^2 \left( {a_{ij}^{b}\frac{w_j(\mathbf{A}(b))}{w_i(\mathbf{A}(b))}} \right) =
  \ln^2\left( a_{ij} \frac{w_j(\mathbf{A})}{w_i(\mathbf{A})} \right)^b  = \\
  b^2 \ln^2 \left( {a_{ij}\frac{w_j}{w_i}} \right).
 \]
Hence,
\[
GCI(\mathbf{A}(b)) = b^2 GCI(\mathbf{A})
\]
\noindent and consequently A3 is satisfied, since
\[
GCI(\mathbf{A}(b)) \geq GCI(\mathbf{A}) ~~\forall b>1 .
\]

\item[A4]
As proved by \citet{BrunelliCritchFedrizzi2011}, the index $GCI$ is proportional to the following quantity
\begin{equation}
\label{eq:GCI_prop}
\sum_{i=1}^{n-2} \sum_{j=i+1}^{n-1} \sum_{k=j+1}^{n} \left(\log_{9}a_{ik}a_{kj}a_{ji}\right)^2.
\end{equation}
Given a consistent matrix $\mathbf{A} \in \mathcal{A}^{*}$, let us
fix the entry $a_{pq}~(\neq 1)$ of $\mathbf{A}$ with $p<q$. We
replace $a_{pq}$ with $a_{pq}^{\delta}$ and its reciprocal $a_{qp}$
with $a_{qp}^{\delta}$ obtaining the matrix
$\mathbf{A}_{pq}(\delta)$. Quantity (\ref{eq:GCI_prop}) is clearly
null for $\mathbf{A}$, while for $\mathbf{A}_{pq}(\delta)$ only the
terms not containing $a_{pq}^{\delta}$ or $a_{qp}^{\delta}$ are
null. Hence, let us consider the terms of (\ref{eq:GCI_prop}) for
$\mathbf{A}_{pq}(\delta)$ containing $a_{pq}^{\delta}$ (results can
be automatically extended to the terms containing
$a_{qp}^{\delta}$):
\[
\left( \log_{9}a^{\delta}_{pq}a_{qj}a_{jp} \right)^{2} .
\]
Since $\mathbf{A}$ is consistent, it is $a_{pq}=a_{pj}a_{jq}$. Like
in previous proofs, let us denote this quantity by $x$. Then, thanks
to reciprocity, it is $a_{qj}a_{jp}=1/x$ and thus
\[
\left( \log_{9}a_{pq}^{\delta}a_{qj}a_{jp} \right)^{2} = \left(
\log_{9}a_{pq}^{\delta -1}a_{pq}a_{qj}a_{jp} \right)^{2} = \left(
\log_{9} x^{\delta -1} \right)^{2} .
\]
Then every term in (\ref{eq:GCI_prop}) containing $a_{pq}^{\delta}$ has the same expression, say $\beta(x,\delta)=\left( \log_{9}x^{\delta -1} \right)^{2}= \left( (\delta -1) \log_{9} x \right)^{2}$. Taking the derivative with respect to $\delta$, it is,
\[
\frac{\partial \beta (x,\delta)}{\partial \delta} = 2(\delta -1)\left( \log_{9}x \right)^{2}
\]
and then
\[
\begin{array}{cc}
\delta > 1 &\Rightarrow \frac{\partial \beta (x,\delta)}{\partial \delta} >0 \\
\delta < 1 &\Rightarrow \frac{\partial \beta (x,\delta)}{\partial \delta} <0
\end{array}
\]
and the same holds also for the sum in (\ref{eq:GCI_prop}), thus proving that $GCI$ satisfies A4.
\item[A5] For positive matrices, function $GCI$ is continuous, as it is a sum of continuous functions.
\end{description}
\end{proof}


\begin{proof}[Proof of Proposition \ref{prop:RE_inv}]
For any $b \neq 0$ it is
\begin{align*}
RE(\mathbf{A}(b))&=1-\dfrac{ \sum_{i=1}^{n} \sum_{j=1}^{n} \left( \frac{1}{n} \sum_{k=1}^{n} \log a_{ik}^{b}- \frac{1}{n} \sum_{k=1}^{n} \log a_{jk}^{b} \right)^{2}}{ \sum_{i=1}^{n} \sum_{j=1}^{n} (\log a_{ij}^{b})^{2}}\\
&=1-\dfrac{ \sum_{i=1}^{n} \sum_{j=1}^{n} b^{2} \left( \frac{1}{n} \sum_{k=1}^{n}  \log a_{ik}- \frac{1}{n} \sum_{k=1}^{n}  \log a_{jk} \right)^{2}}{ \sum_{i=1}^{n} \sum_{j=1}^{n} b^{2} (\log a_{ij})^{2}}\\
&=1-\dfrac{  \sum_{i=1}^{n} \sum_{j=1}^{n} \left( \frac{1}{n} \sum_{k=1}^{n} \log a_{ik}- \frac{1}{n} \sum_{k=1}^{n} \log a_{jk} \right)^{2}}{  \sum_{i=1}^{n} \sum_{j=1}^{n} (\log a_{ij})^{2}}\\
&=RE(\mathbf{A}).
\end{align*}
\end{proof}

\begin{proof}[Proof of Proposition \ref{prop:RE}]
In the original paper by \citet{Barzilai1998}, it was
remarked that A1 is satisfied by $RE$. The proof of the satisfaction
of A2 is also elementary and Proposition \ref{prop:RE_inv} proves
the satisfaction of A3. In the following we shall prove separately
that A4 and A5 do not hold.
\begin{description}
\item[A4]
Following \citet{Barzilai1998}, let us consider the
equivalent additive formulation of the pairwise comparison matrices
and rewrite $RE$ in the form

\begin{equation}
RE(\mathbf{A})=\frac{ \sum_{i=1}^{n} \sum_{j=1}^{n} e_{ij}} {
\sum_{i=1}^{n} \sum_{j=1}^{n} a_{ij}} ,
\end{equation}
\noindent where $e_{ij}  = a_{ij} - c_{ij} , \quad c_{ij}  = w_i -
w_j $ and $ w_i  = \frac{1}{n}\sum_{j=1}^{n} a_{ij} $ .
We start from a consistent (in the additive sense) matrix
$\mathbf{A}$ and replace the entry $a_{pq}$ with $a_{pq}' = a_{pq} +
x $, $x \neq 0$. Necessarily, $a_{qp}$ must be replaced by $a_{qp}'
= a_{qp}-x$ to preserve the additive reciprocity, i.e. the
antisymmetry of $\mathbf{A}$. Let
$\mathbf{A}'=\mathbf{A}'(x)=(a_{ij}')$ be the obtained inconsistent
matrix. In order to evaluate $RE(\mathbf{A}')$, let us calculate
\begin{align*}
w_i' & = w_i  \quad \quad \textrm{if} \quad i \neq p , i \neq q \\
w_p' & =  \frac{1}{n}\sum_{j=1}^{n} a_{pj}' = \frac{1}{n}\sum_{j=1}^{n} a_{pj} + \frac{x}{n} = w_p + \frac{x}{n} \\
w_q' & =  \frac{1}{n}\sum_{j=1}^{n} a_{qj}' = \frac{1}{n}\sum_{j=1}^{n} a_{qj} - \frac{x}{n} = w_q - \frac{x}{n} \\
c_{ij}' & = c_{ij}  \quad \quad \textrm{if} \quad i, j \neq p , q \\
c_{pj}' & = w_p' - w_j' =  w_p + \frac{x}{n} - w_j = c_{pj} +
\frac{x}{n} \quad \textrm{if} \quad j \neq p,  j \neq q \\
c_{pq}' & = w_p' - w_q' =  w_p + \frac{x}{n} - (w_q - \frac{x}{n}) =
w_p - w_q + \frac{2x}{n} = c_{pq} + \frac{2x}{n} \\
c_{ip}' & = w_i' - w_p' =  w_i - w_p - \frac{x}{n}  = c_{ip} -
\frac{x}{n} \quad \textrm{if} \quad i \neq p,  j \neq q \\
c_{qj}' & = w_q' - w_j' =  w_q - \frac{x}{n} - w_j   = c_{qj} -
\frac{x}{n} \quad \textrm{if} \quad j \neq p,  j \neq q \\
c_{qp}' & = w_q' - w_p' =  w_q - \frac{x}{n} - (w_p + \frac{x}{n}) =
w_q - w_p - \frac{2x}{n} = c_{qp} - \frac{2x}{n} \\
c_{iq}' & = w_i' - w_q' =  w_i - w_q + \frac{x}{n}  = c_{iq} +
\frac{x}{n} \quad \textrm{if} \quad i \neq p,  i \neq q .
\end{align*}
\noindent Therefore, the values $e_{ij}'$ are
\begin{align*}
e_{ij}' & = a_{ij}' - c_{ij}' =0  \quad \quad \textrm{if} \quad i, j \neq p , q \\
e_{pj}' & = a_{pj}' - c_{pj}' = a_{pj} -(c_{pj} + \frac{x}{n})=
e_{pj} - \frac{x}{n} = - \frac{x}{n} \quad \quad \textrm{if} \quad j
\neq p, q \\
e_{pq}' & = a_{pq}' - c_{pq}' = a_{pq} + x -(c_{pq} + \frac{2x}{n})=
e_{pq} + x - \frac{2x}{n} = x - \frac{2x}{n} = \frac{n-2}{n}x \\
e_{qj}' & = a_{qj}' - c_{qj}' = a_{qj} -(c_{qj} - \frac{x}{n})=
e_{qj} + \frac{x}{n} = \frac{x}{n} \quad \quad \textrm{if} \quad j
\neq p, q \\
e_{qp}' & = a_{qp}' - c_{qp}' = a_{qp} - x -(c_{qp} - \frac{2x}{n})=
e_{qp} - x + \frac{2x}{n} = - \frac{n-2}{n}x \\
e_{ip}' & = a_{ip}' - c_{ip}' = a_{ip} -(c_{ip} - \frac{x}{n})=
e_{ip} + \frac{x}{n} = \frac{x}{n} \quad \quad \textrm{if} \quad i
\neq p, q \\
e_{iq}' & = a_{iq}' - c_{iq}' = a_{iq} -(c_{iq} + \frac{x}{n})=
e_{iq} - \frac{x}{n} = - \frac{x}{n} \quad \quad \textrm{if} \quad i
\neq p, q ,
\end{align*}
\noindent and, clearly,
\[
e_{ii}' =0 \qquad i=1,...,n .
\]
\noindent The relative error $RE(\mathbf{A}')$ is then
\begin{align*}
RE(\mathbf{A}') & =\frac{ \sum_{i=1}^{n} \sum_{j=1}^{n} (e_{ij}')^2}
{ \sum_{i=1}^{n} \sum_{j=1}^{n} (a_{ij}')^2} \\
& = \frac{ \sum_{j \neq q, p} (e_{pj}')^2 + (e_{pq}')^2 + \sum_{j
\neq p, q} (e_{qj}')^2 + (e_{qp}')^2 + \sum_{i \neq p, q}
(e_{ip}')^2 + \sum_{i \neq p, q} (e_{iq}')^2}
 {\sum_{ij} (a_{ij}')^2} \\
& = \frac{ \sum_{j \neq p, q} (- \frac{x}{n})^2 + (\frac{n-2}{n}x)^2
+ \sum_{j \neq p, q} (\frac{x}{n})^2 + (- \frac{n-2}{n}x)^2 +
\sum_{i \neq p, q} (\frac{x}{n})^2 + \sum_{i \neq p, q} (-
\frac{x}{n})^2}
 {\sum_{ij} (a_{ij})^2 - (a_{pq})^2 - (a_{qp})^2 + (a_{pq} + x)^2 + (a_{qp} -
 x)^2} \\
  & = \frac{ 4(n-2)(\frac{x}{n})^2 + 2(\frac{n-2}{n}x)^2 }
 {\sum_{ij} (a_{ij})^2 + 2a_{pq}x + x^2 - 2a_{qp}x + x^2} \\
 & = \frac{2(n^2-2n)}{n^2} \frac{ x^2 }  {2x^2 +  4a_{pq}x + \sum_{ij}(a_{ij})^2 } \\
 & = H_n \frac{ x^2 }  {x^2 +  2 \alpha x + K } ,
\end{align*}
\noindent where
\[
H_n  =  \frac{(n^2-2n)}{n^2} = 1 - \frac{2}{n} \, ; \qquad K  =
\frac{1}{2} \sum_{ij}(a_{ij})^2 \, ; \qquad \alpha  = a_{pq}
\]
\noindent By disregarding the positive constant $H_n$ (for $n \geq
3$) and taking the derivative of $RE(\mathbf{A}')$ with respect to
$x$ one obtains
\begin{align*}
\frac{\partial RE(\mathbf{A}')}{\partial x} & = \frac{2x(x^2 + 2
\alpha x + K) - x^2(2x + 2 \alpha)}{(x^2 +  2 \alpha x + K)^2} \\
& = 2 \frac{ (2 \alpha x + (K -  \alpha))x}{(x^2 +  2
\alpha x + K)^2} . \\
\end{align*}
\noindent Assuming, without loss of generality, $\alpha < 0$, it
follows $\frac{\alpha - K}{2   \alpha} >0 $. Then, $RE(\mathbf{A}')$
 decreases in $(-\infty , 0) $ , increases in $(0 , \frac{\alpha - K}{2   \alpha})
 $ and decreases in $(\frac{\alpha - K}{2   \alpha} , + \infty). $
 This means that if $a_{pq}$ is increased by $x$, the inconsistency
 index $RE(\mathbf{A}')$ does not monotonically increases in $(0 , + \infty) $, so that
 A4 is not satisfied.

\item[A5] From Proposition \ref{prop:RE_inv}, it follows that
$RE(\mathbf{A})=RE(\mathbf{A}(b))~\forall b \neq 0$. Then, for every
$\mathbf{A} \in \mathcal{A}$ it holds
\begin{equation}
\label{lim1}
\lim_{b\rightarrow 0} RE(\mathbf{A}(b))=RE(\mathbf{A}) \, .
\end{equation}
\noindent Conversely, it is
\begin{equation}
\label{lim2}
RE(\lim_{b\rightarrow 0} \mathbf{A}(b)) = RE((1)_{n \times n})=0 \,
,
\end{equation}
\noindent and therefore
\begin{equation}
\lim_{b\rightarrow 0} RE(\mathbf{A}(b)) \neq RE(\lim_{b\rightarrow
0} \mathbf{A}(b))
\end{equation}
\noindent for every inconsistent matrix $\mathbf{A}$. Then,
$RE(\mathbf{A})$ is not continuous in $(1)_{n \times n}$.
\noindent Note that, since (\ref{lim1}) and (\ref{lim2}) hold for
every matrix $\mathbf{A} \in \mathcal{A}$, then, in every
neighborhood of the consistent matrix $(1)_{n \times n}$ there exist
matrices $\mathbf{A}$ with any possible value of $RE(\mathbf{A})$,
even the highest one $RE(\mathbf{A})=1$.
\end{description}

\end{proof}

\begin{proof}[Proof of Proposition \ref{prop:HCI}] We shall prove each axiom separately.
\begin{description}
\item[A1] This comes along with the result proved by \citet{SteinMizzi2007}
that $HCI(\mathbf{A})=0$ if and only if $\mathbf{A}$ is
consistent.
\item[A2] A permutation of the alternatives corresponds to a row-column permutation on the pairwise comparison matrix. When
columns of $\mathbf{A}$ are swapped this induces a permutation of
the indices of $s_{1},\ldots,s_{n}$. When rows are swapped, this
produces a change in the order of the arguments of sums
$\sum_{i=1}^{n} a_{ij}~\forall j$. As both the harmonic mean
(\ref{eq:mediaarmonica}) and the sum are commutative functions, A2 is satisfied.
\item[A3] Let us
prove it for a subset of inconsistent pairwise comparison matrices.
We consider all the inconsistent pairwise comparison matrices which
have one column, say $j^{*}$, with all entries smaller than one
except for the diagonal element $a_{j^{*}j^{*}}$, i.e. $\exists
j^{*}, a_{ij^{*}} < 1 \forall i \neq j^{*}$. In this case, if we
apply transformation $f(a_{ij})=a_{ij}^{b}$ and let $b$ increase to
$+\infty$, we obtain that all the values $s_{j}$ tend to infinite
except $s_{j^{*}}$ which tends to 1. Consequently, all the terms of
the sum $\sum_{j=1}^{n}\frac{1}{s_{j}}$ converge to 0, except
$1/s_{j^{*}}$ which, instead, converges to 1, showing that
$HM(\mathbf{A})$ will tend to $n$ and $HCI(\mathbf{A})$ to 0. As the
initial set of matrices was inconsistent and therefore their value
of $HCI$ must have been positive, A3 is not satisfied.

\item[A4] Consider that a consistent pairwise comparison matrix of order $n$ can be written as
\begin{equation}
\mathbf{A}=
\begin{pmatrix}
1                                      & a_{12}                                & a_{12}a_{23}     & \cdots  & a_{12} \cdots a_{n-1 \, n}  \\
\frac{1}{a_{12}}                       &    1                                  &     a_{23}       & \cdots  & a_{23} \cdots a_{n-1 \, n}  \\
\cdots                                 & \cdots                                & \cdots                             & \cdots  & \cdots    \\
\frac{1}{a_{12} \cdots a_{n-2 \, n-1}} & \frac{1}{a_{23} \cdots a_{n-2 \, n-1}}& \frac{1}{a_{34} \cdots a_{n-2 \, n-1}}                                  & \cdots  & a_{n-1 \, n} \\
\frac{1}{a_{12} \cdots a_{n-1 \, n}}   & \frac{1}{a_{23} \cdots a_{n-1 \, n}}  & \frac{1}{a_{34} \cdots a_{n-1 \, n}}& \cdots  & 1
\end{pmatrix} \in \mathcal{A}^{*}
\end{equation}
\noindent Let us apply the exponential function to the element $a_{12}$ ($\neq 1$)
and its reciprocal:
\begin{equation}
\mathbf{A}_{12}(\delta)=
\begin{pmatrix}
1                                      & (a_{12})^{\delta}                                & a_{12}a_{23}     & \cdots  & a_{12} \cdots a_{n-1 \, n}  \\
\left(\frac{1}{a_{12}}\right)^{\delta}                       &    1                                  &     a_{23}       & \cdots  & a_{23} \cdots a_{n-1 \, n}  \\
\cdots                                 & \cdots                                & \cdots                             & \cdots  & \cdots    \\
\frac{1}{a_{12} \cdots a_{n-2 \, n-1}} & \frac{1}{a_{23} \cdots a_{n-2 \, n-1}}& \frac{1}{a_{34} \cdots a_{n-2 \, n-1}}                                  & \cdots  & a_{n-1 \, n} \\
\frac{1}{a_{12} \cdots a_{n-1 \, n}}   & \frac{1}{a_{23} \cdots a_{n-1 \, n}}  & \frac{1}{a_{34} \cdots a_{n-1 \, n}}& \cdots  & 1
\end{pmatrix}
\end{equation}
\noindent In fact, as index $HCI$ satisfies A2, by choosing
$a_{12}$ we do not lose generality. To prove that $HCI$ satisfies
A4 we shall show that
\begin{eqnarray}
  \frac{\partial HCI((\mathbf{A}_{12}(\delta))}{\partial \delta} &<& 0 \quad \mathrm{for} \quad
\delta<1 \label{kminore1} \\
  \frac{\partial HCI((\mathbf{A}_{12}(\delta))}{\partial \delta} &>& 0 \quad \mathrm{for} \quad
\delta>1 \label{kge1}
\end{eqnarray}
\noindent It is sufficient to prove it for
$HM(\mathbf{A}_{12}(\delta))$, since $HCI(\mathbf{A}_{12}(\delta))$ is just
one of its monotone increasing affine transforms.
For notational convenience, let us define $b_{pq}=\prod_{i=p}^{q-1}a_{i \, i+1} $, e.g.
$b_{14}=a_{12}a_{23}a_{34}$. Thus, we can rewrite
$HM(\mathbf{A}_{12}(\delta))$ in the following way, to keep explicit
$\left( a_{12} \right)^{\delta}$ and $\left( 1 / a_{12}
\right)^{\delta}$
\begin{equation}
\label{eq:HMdelta}
HM(\mathbf{A}_{12}(\delta))=\frac{n}
{\underbrace{\frac{1}{1+\left(\frac{1}{a_{12}}\right)^{\delta}+\sum_{j=3}^{n} \frac{1}{b_{1j}}}}_{\frac{1}{s_{1}}}
+
\underbrace{\frac{1}{(a_{12})^{\delta} + 1 + \sum_{j=3}^{n} \frac{1}{b_{2j}} }}_{\frac{1}{s_{2}}}
+
\sum_{i=3}^{n} \frac{1}{s_{i}}}.
\end{equation}
Considering that only the first two arguments of the sum at the
denominator of (\ref{eq:HMdelta}) contain $\delta$, its derivative
w.r.t. $\delta$ is
\begin{equation}
\frac{\partial HM(\mathbf{A}_{12}(\delta))}{\partial \delta} = - \frac{n \left[
\frac{a_{12}^{-\delta}\ln
( a_{12})}{\left(1+\left(\frac{1}{a_{12}}\right)^\delta + \sum_{j=3}^{n} \frac{1}{b_{1j}} \right)^2}
- \frac{a_{12}^{\delta}\ln (a_{12})}{\left( (a_{12})^\delta + 1 +
\sum_{j=3}^{n} \frac{1}{b_{2j}} \right)^2}\right]
}{ \left( \cdot \right)^2} \label{eq2}
\end{equation}
We first consider the case $\delta>1$ and $a_{12}>1$, which implies
$\ln(a_{12})>0$. Condition (\ref{kge1}) is satisfied if in
(\ref{eq2}) the quantity between square brackets is negative, i.e.
\begin{equation}
\frac{a_{12}^{-\delta}}{\left(1+\left(\frac{1}{a_{12}}\right)^\delta + \sum_{j=3}^{n} \frac{1}{b_{1j}} \right)^2}
<
 \frac{a_{12}^{\delta}}{\left( (a_{12})^{\delta} + 1 + \sum_{j=3}^{n} \frac{1}{b_{2j}} \right)^2} .%
\label{disug1}
\end{equation}
\noindent With some computation one obtains
\[
\frac{a_{12}^{-\delta}}{\left(\frac{ b_{1n} + a_{12}^{1-\delta} b_{2n} + \sum_{j=3}^{n-1}b_{jn} + 1 }{b_{1n}} \right)^2}
<
\frac{\frac{a_{12}^{\delta}}{a_{12}^{2}}}{\left( \frac{ a_{12}^{\delta}b_{2n} + \sum_{j=2}^{n-1}b_{jn} +1  }{b_{1n}}
\right)^2}
\]
\[
\frac{1}{\left(b_{1n} + a_{12}^{1-\delta} b_{2n} + \sum_{j=3}^{n-1}b_{jn} + 1  \right)^2}
<
\frac{a_{12}^{2\delta-2}}{\left(  a_{12}^{\delta}b_{2n} +
\sum_{j=2}^{n-1}b_{jn} +1 \right)^2}
\]
\[
b_{1n} + a_{12}^{1-\delta} b_{2n} + \sum_{j=3}^{n-1}b_{jn} + 1 %
>
\frac{  a_{12}^{\delta}b_{2n} + \sum_{j=2}^{n-1}b_{jn} +1
}{a_{12}^{\delta-1}} .
\]
By multiplying both sides times $a_{12}^{\delta-1}$ and given that
$b_{1n}=a_{12}b_{2n}$, it is
\[
b_{2n} + a_{12}^{\delta -1} \sum_{j=3}^{n-1} b_{jn} + a_{12}^{\delta-1}
>
\sum_{j=2}^{n-1} b_{jn} +1 .
\]
Given that $\left( \sum_{j=2}^{n-1} b_{jn} \right) - b_{2n} = \sum_{j=3}^{n-1} b_{jn}$, then
\[
a_{12}^{\delta-1} \left( \sum_{j=3}^{n-1} b_{jn} + 1 \right) > \left( \sum_{j=3}^{n-1} b_{jn} + 1 \right) \Rightarrow
a_{12}^{\delta-1} > 1.
\]
\noindent Thus, if  $a_{12}>1$ and $\delta>1$, one obtains that
$\frac{\partial HM((\mathbf{A}_{12}(\delta))}{\partial \delta} > 0 $
and $\frac{\partial HCI((\mathbf{A}_{12}(\delta))}{\partial \delta}
> 0 $.
\noindent If, instead, $0<a_{12}<1$, one knows that
$\ln(a_{12})<0$, and all inequalities, starting from
(\ref{disug1}), are inverted. Consequently, $ a_{12}^{\delta-1} <
1 $, which is true for $\delta>1$, and so, also in this case,
$\frac{\partial HCI((\mathbf{A}_{12}(\delta))}{\partial \delta} > 0$.
For $\delta<1$, the opposite happens: condition (\ref{kminore1}) is
satisfied and $\frac{\partial
HCI((\mathbf{A}_{12}(\delta))}{\partial \delta} < 0 $. This proves
that $HCI$ satisfies (\ref{monotonicity}).
\item[A5] Since pairwise comparison matrices are positive matrices,
it follows that $s_j>0 \; \forall j$. Therefore, $HCI(\mathbf{A})$
is a continuous function.
\end{description}
\end{proof}

\begin{proof}[Proof of Proposition \ref{prop:GW}]
As in the other proofs, we shall prove the axioms in sequence.
\begin{description}
\item[A1 A2]
The proof that $GW$ satisfies axioms A1 and A2 is straightforward.
\item[A3]
Analogously to Proposition \ref{prop:HCI}, we shall prove that $GW$
does not satisfy A3 for a whole set of pairwise comparison matrices,
instead of providing a single counterexample. Consider the set of
inconsistent pairwise comparison matrices $\mathbf{A}$ such that all
the greatest elements of each column lie on the same row, i.e.
$\exists i^{*}, a_{i^{*}j}>a_{ij} \; \forall i\neq i^{*}, \; \forall
j$. It is $GW(\mathbf{A})>0$ for all these matrices, since they are
inconsistent. Without loss of generality we assume $i^{*}=1$; then,
applying $f$ to $\mathbf{A}$ obtaining
$f(\mathbf{A})=\mathbf{A}(b)=(a_{ij}^{b})_{ n \times n}$, when the
exponent $b$ tends to $+\infty$, ($b \rightarrow +\infty$), it is
\[
\overline{\mathbf{A}(b)} \rightarrow \begin{pmatrix}
1 & 1 & \ldots & 1 \\
0 & 0 & \ldots & 0 \\
\ldots & \ldots & \ldots & \ldots \\
0 & 0 & \ldots & 0 \\
\end{pmatrix}~~~~
\bar{\mathbf{g}} \rightarrow \begin{pmatrix}
1   \\
0   \\
\vdots  \\
0  \\
\end{pmatrix};
\]
where $\bar{\mathbf{g}}$ is the normalized weight vector obtained
from $\overline{\mathbf{A}(b)}$ by means of the geometric mean
method. Hence index $GW$ tends to $0$, ($GW(\mathbf{A}(b))
\rightarrow 0$) and violates (\ref{A3}).
\item[A5] Index $GW$ is continuous, as it is a sum of continuous functions.
\end{description}
\end{proof}
\end{document}